\documentclass{article}

\usepackage{microtype}
\usepackage{graphicx}
\usepackage{subfigure}
\usepackage{booktabs} 

\usepackage{hyperref}

\usepackage[accepted]{icml2025}

\usepackage{amsmath}
\usepackage{amssymb}
\usepackage{mathtools}
\usepackage{amsthm}

\usepackage[capitalize,noabbrev]{cleveref}

\theoremstyle{plain}
\newtheorem{theorem}{Theorem}[section]
\newtheorem{proposition}[theorem]{Proposition}
\newtheorem{lemma}[theorem]{Lemma}

\theoremstyle{definition}

\theoremstyle{remark}

\definecolor{electricindigo}{rgb}{0.44, 0.0, 1.0}
\usepackage{multirow}

\definecolor{deblue}{RGB}{11,132,147}
\definecolor{ocra}{RGB}{204, 119, 34}
\usepackage{enumitem}
\usepackage{tikz}

\usepackage{MnSymbol,bbding,pifont}

\definecolor{deblue}{RGB}{11,132,147}
\definecolor{ocra}{RGB}{204, 119, 34}
\definecolor{electricindigo}{rgb}{0.44, 0.0, 1.0}
\usepackage[capitalize]{cleveref}
\crefname{section}{Sec.}{Secs.}
\Crefname{section}{Section}{Sections}
\Crefname{table}{Table}{Tables}
\crefname{table}{Tab.}{Tabs.}
\definecolor{indigo(web)}{rgb}{0.29, 0.0, 0.51}
\usepackage{multirow}
\usepackage{xcolor}
\definecolor{darkblue}{RGB}{40,40,85}
\definecolor{babyblue}{rgb}{0.54, 0.81, 0.94}
\definecolor{pearDark}{HTML}{2980B9}
\definecolor{pearDarker}{HTML}{1D2DEC}
\usepackage[capitalize]{cleveref}
\crefname{section}{Sec.}{Secs.}
\Crefname{section}{Section}{Sections}
\Crefname{table}{Table}{Tables}
\crefname{table}{Tab.}{Tabs.}
\usepackage{fontawesome5}
\usepackage{multirow}
\usepackage{colortbl}
\usepackage{tabulary}
\usepackage{etoolbox}
\usepackage{tikz}
\usepackage{pifont}
\usepackage[textsize=tiny]{todonotes}
\usepackage{xcolor}
\newcommand{\Sam}[1]{{\color{blue} #1}}

\icmltitlerunning{SpectraKAN: Conditioning Spectral Operators}

\begin{document}

\twocolumn[
\icmltitle{SpectraKAN: Conditioning Spectral Operators}




\begin{icmlauthorlist}
\icmlauthor{Chun-Wun Cheng}{yyy}
\icmlauthor{Carola-Bibiane Sch\"onlieb}{yyy}
\icmlauthor{Angelica I. Aviles-Rivero}{zzz}

\end{icmlauthorlist}

\icmlaffiliation{yyy}{DAMTP, University of Cambridge, Cambridge, UK}
\icmlaffiliation{zzz}{YMSC, Tsinghua Univsity, Beijing, China}

\icmlcorrespondingauthor{Angelica I. Aviles-Rivero}{aviles-rivero@tsinghua.edu.cn}

\icmlkeywords{Machine Learning, ICML}

\vskip 0.3in
]



\printAffiliationsAndNotice{} 

\begin{abstract}
Spectral neural operators, particularly Fourier Neural Operators (FNO), are a powerful framework for learning solution operators of partial differential equations (PDEs) due to their efficient global mixing in the frequency domain. However, existing spectral operators rely on \textit{static Fourier kernels} applied uniformly across inputs, limiting their ability to capture multi-scale, regime-dependent, and anisotropic dynamics governed by the global state of the system. We introduce \textbf{SpectraKAN}, a neural operator that {conditions the spectral operator on the input itself}, turning static spectral convolution into an \textit{input-conditioned integral operator}. This is achieved by extracting a compact global representation from spatio-temporal history and using it to modulate a multi-scale Fourier trunk via single-query cross-attention, enabling the operator to adapt its behaviour while retaining the efficiency of spectral mixing. We provide theoretical justification showing that this modulation converges to a resolution-independent continuous operator under mesh refinement and KAN gives smooth, Lipschitz-controlled global modulation. Across diverse PDE benchmarks, SpectraKAN achieves state-of-the-art performance, reducing RMSE by up to $49\%$ over strong baselines, with particularly large gains on challenging spatio-temporal prediction tasks.

\end{abstract}

\section{Introduction}
Partial differential equations (PDEs) underpin a large fraction of scientific and engineering computation, governing phenomena ranging from diffusion--reaction and wave propagation to geophysical flows and climate dynamics \cite{roubivcek2005nonlinear}. While classical numerical solvers (e.g., finite difference/volume/element methods) are accurate and robust, their computational cost can be prohibitive in regimes requiring repeated solves---such as design optimisation, uncertainty quantification, and inverse problems or when high resolutions and long time horizons are needed.

Neural operators \cite{lu2019deeponet, li2020fourier} have recently emerged as a compelling alternative for building fast surrogate PDE solvers by learning mappings between infinite-dimensional function spaces from data. Once trained, a neural operator can evaluate the solution operator orders-of-magnitude faster than a traditional solver, while retaining key benefits such as discretisation-invariant generalisation and the potential for zero-shot super-resolution. Among these approaches, the Fourier Neural Operator \cite{li2020fourier} has become a widely adopted backbone due to its efficient global mixing via spectral convolution.

Despite strong empirical performance, existing Fourier-based operators face persistent challenges on complex spatio-temporal systems.
First, many PDEs exhibit multi-scale structure and anisotropy, while standard FNO blocks rely on single-resolution spectral representations with largely static filters, limiting their ability to jointly capture coarse–fine dynamics and direction-dependent behavior.
Second, real PDE dynamics are often regime-dependent, with global states and constraints influencing local evolution; however, most Fourier operators apply the same spectral kernel at inference, reducing adaptivity and potentially harming stability in long autoregressive rollouts.
Third, interpretability and controllability remain limited, as nonlinearity is typically introduced through fixed pointwise activations with little structural meaning.

A broad set of extensions has been proposed to alleviate parts of these issues. Multi-resolution and multi-basis operator learners \cite{li2024multi, gupta2021multiwavelet, roy2025physics} improve scale separation, and adaptive/state-dependent kernels \cite{dangadaptfno} introduce conditionality through attention or hypernetworks. Transformer-based neural operators \cite{wu2024transolver, hao2023gnot} offer flexible nonlocal interactions, while Kolmogorov--Arnold Networks (KANs) \cite{liu2024kan} replace fixed activations with learnable spline-based functions that can improve interpretability. Yet, these directions are often pursued in isolation: multi-scale variants may remain globally static, adaptive kernels may be single-scale or expensive, and KAN components are frequently used as local nonlinearities without explicitly controlling the operator-level spectral behaviour or global conditioning. As a result, there remains a gap for a unified operator that is simultaneously (i) multi-scale and anisotropy-aware, (ii) globally conditioned in a state-dependent manner, and (iii) efficient and stable for spatio-temporal PDE forecasting.

We address this gap with \textbf{SpectraKAN}, a \textbf{Spectral KAN Attention Neural Operator (SKAN)} that turns a static Fourier operator into a \emph{dynamic, context-aware} spectral operator. SpectraKAN couples three components:
\emph{(i) Adaptive Multi-Scale FNO (AMFNO)} that fuses coarse and fine spectral branches (with directional gating) to better capture long-range coupling while reducing Fourier truncation bias;
\emph{(ii) Global Conditioning KAN (G-KAN)} that compresses spatio-temporal history (and optional parameters) into a compact, spline-interpretable global token; and
\emph{(iii) Global Modulation Layer (GML)} that injects this token through a single-query cross-attention mechanism. Crucially, SpectraKAN does \emph{not} only refine the Fourier convolution kernel within the FNO block. Instead, it introduces an additional \emph{input-conditioned kernel integral operator} that is conditioned on the entire input via the KAN token. SpectraKAN can be viewed as 
$G(u)(x) =f(u)(x) + A_{\Phi_{\mathrm{KAN}}(u)}[f(u)](x)$ 
 where $f(\cdot)$ is the (multi-scale) Fourier trunk and $\mathcal{A}_{{\Phi{\mathrm{KAN}}(u)}}$ is a nonlocal integral operator whose kernel depends on the KAN-produced global context. This separation changes the effective operator family from a purely stationary spectral convolution to a \emph{globally conditioned, non-stationary} integral-kernel operator, enabling the model to adapt its global dynamics across regimes while keeping the spatial backbone efficient.

Our main contributions are as follows.
\begin{itemize}\setlength\itemsep{0em}\setlength\parskip{0em}\setlength\parsep{0em}
    \item We identify a fundamental limitation of existing Fourier neural operators: the use of \emph{static spectral kernels} that are applied uniformly across inputs, restricting their ability to model regime-dependent and multi-scale PDE dynamics.
    \item We introduce \textbf{SpectraKAN}, which reformulates spectral operator learning by augmenting the Fourier trunk with an \emph{input-conditioned kernel integral operator}, thereby changing the effective operator class from stationary spectral convolution to a globally conditioned, non-stationary operator.
    \item We propose a principled mechanism for global conditioning via a Kolmogorov--Arnold Network that extracts a compact, spline-interpretable token from spatio-temporal history and modulates the spectral operator through single-query attention and showed that KAN gives smooth, Lipschitz- controlled global modulation.
    \item We provide theoretical analysis showing that this attention-based modulation is a consistent quadrature approximation of a continuous integral operator and converges to a \emph{resolution-independent operator} under mesh refinement.
    \item We demonstrate across diverse PDE benchmarks that conditioning the spectral operator yields substantial performance gains over strong Fourier-, transformer-, and KAN-based baselines, particularly on challenging spatio-temporal and multi-scale prediction tasks.
\end{itemize}

\section{Related Work}
\paragraph{Deep learning for PDEs} Neural operators have emerged as a powerful framework for PDE surrogates, learning mappings between function spaces with discretization-invariant generalization. DeepONet \cite{lu2019deeponet} parameterizes nonlinear operators via coupled branch–trunk networks, while FNO \cite{li2020fourier} learns integral kernels in the Fourier domain and achieves strong performance on canonical benchmarks, including zero-shot super-resolution. Building on these foundations, numerous extensions have been proposed to improve expressiveness and efficiency, including (i) multi-resolution or multi-basis operators for enhanced multi-scale modeling \cite{kossaifi2023multi,you2024mscalefno,guo2024mgfno}, (ii) factorized and U-shaped architectures that improve depth and computational efficiency \cite{wen2022u,rahman2022u,li2023long,wang2025hybrid}, and (iii) physics-informed neural operators that incorporate PDE residual constraints to enhance data efficiency and out-of-distribution generalization \cite{li2024physics,ding2025physics}.

\textbf{Transformer-based neural operator.}
Transformer-based neural operators\cite{vaswani2017attention} leverage attention to model long-range dependencies in time-dependent PDEs, with variants such as Galerkin Transformer\cite{cao2021choose} improving efficiency via linear-complexity attention, OFormer\cite{li2022transformer} and GNOT\cite{hao2023gnot} extending the framework to sampled functions, irregular meshes, and multi-input settings while several variants improve accuracy or efficiency \cite{wu2024transolver,bryutkin2024hamlet,wang2024cvit}. More recently, Mamba \cite{cheng2025mamba,hu5149007deepomamba}. has emerged as an efficient alternative and has been applied to operator learning. In contrast, rather than relying on full token-wise self-attention as the core operator, our method uses a multi-scale gated Fourier trunk modulated by a single-query cross-attention mechanism driven by a KAN-based global token, yielding an input-conditioned integral operator that globally controls spatial dynamics.

\textbf{KAN.} Kolmogorov–Arnold Networks (KAN) use B-spline–based learnable activations to improve accuracy and interpretability. KANO \cite{lee2025kano} introduces a KAN-based neural operator motivated by interpretability, while physics-informed KAN integrates KAN into PINNs to enhance accuracy and generalization \cite{zhang2025physics}. Other works incorporate KAN into PDE solvers to improve efficiency \cite{yeo2024kan, abueidda2025deepokan}.
Existing methods primarily embed KAN within neural operators or replace MLP components; in contrast, we are the first to use KAN to learn a global conditioning token.

\section{Methodology}
\textbf{Problem Statement.}
Many PDEs have solutions that are time-dependent, spatially distributed fields. 
We denote the PDE solution as a vector-valued function
$
v : T \times S \times \Theta \rightarrow \mathbb{R}^{d},
$
where $S \subset \mathbb{R}^{p}$ is the spatial domain, $T$ is the temporal index set, and $\Theta$ denotes a parameter space encoding physical coefficients or boundary/initial conditions. 
For each time $t$, we view $v_\theta(t,\cdot)$ as an element of a function space $X$ over $S$ (e.g., $X=L^2(S;\mathbb{R}^d)$).
For a fixed parameter $\theta \in \Theta$, the evolution of the PDE solution is governed by a forward propagator
$
F_{\theta} : X \rightarrow X,
\quad
v_{\theta}(t+1,\cdot) = F_{\theta}\big(v_{\theta}(t,\cdot)\big),
$
which maps the current state to the next state. 
However, because PDE dynamics depend on temporal derivatives, a single state snapshot is often insufficient to approximate $F_{\theta}$. 
Instead, we use a window of $\ell$ preceding states. The resulting discrete-time operator is
$\mathring{F}_{\theta} :
\big( v_{\theta}(t-\ell+1,\cdot), \ldots, v_{\theta}(t,\cdot) \big)
\;\longmapsto\;
v_{\theta}(t+1,\cdot)$.
For convenience, we write the input window as $v_{\theta}([t-\ell+1:t],\cdot)$.
Our goal is to learn a neural operator $\widehat{F}$ such that
$
\widehat{F} \approx \mathring{F}_{\theta} 
\quad \text{for all } \theta \in \Theta,
$
enabling accurate prediction of future PDE states from past observations and generalisation across different physical configurations.

\begin{figure*}[t]
    \centering
    \includegraphics[width=\textwidth]{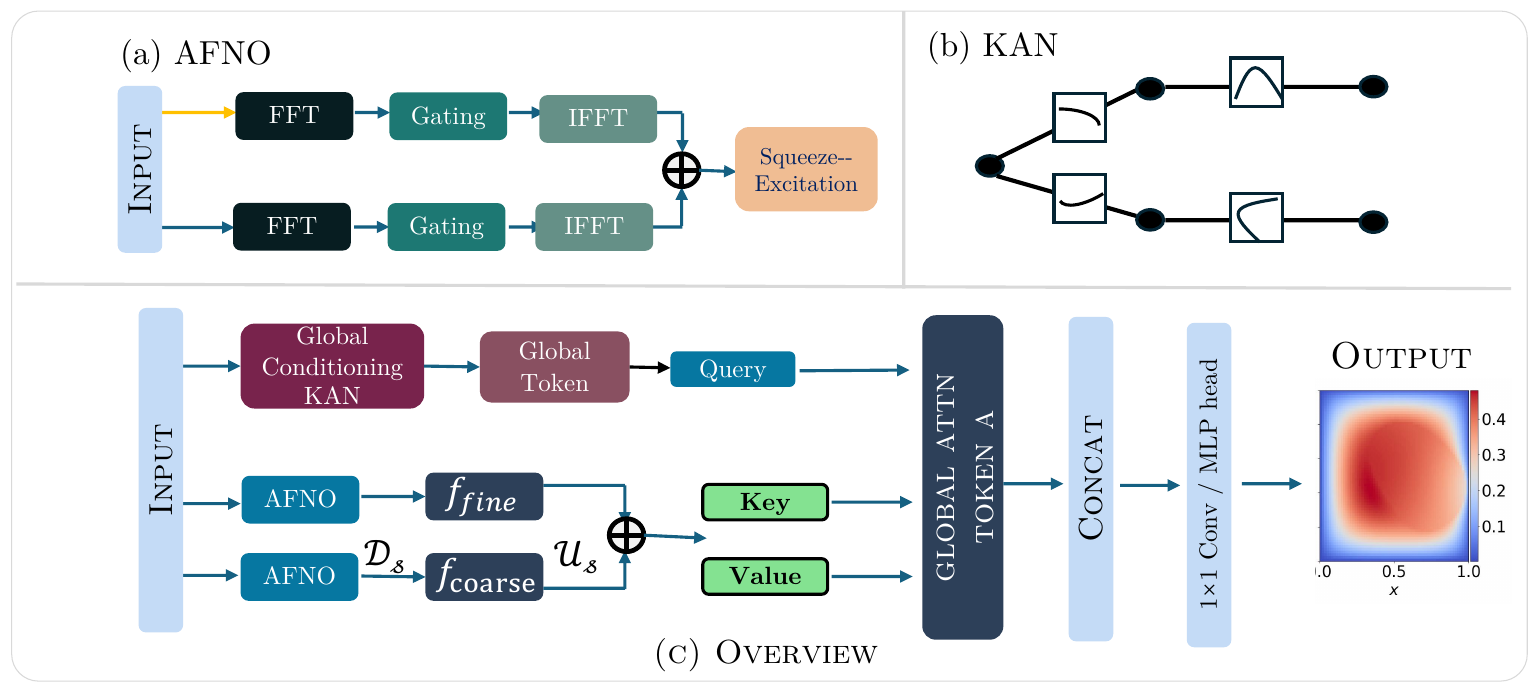}
    \caption{Overview of our SpectraKAN architecture. (a) AFNO performs spectral mixing via FFT–gating–IFFT with Squeeze–Excitation. (b) KAN models nonlinear interactions with learnable spline functions. (c) A global-conditioned KAN produces a global token (query) that fuses with multi-scale AFNO features (key/value), followed by a lightweight projection to obtain the prediction.}
    \label{teaser}
\end{figure*}

\subsection{Architecture Overview}
Our architecture, SpectralKAN (SKAN), consists of three key components.
Adaptive Multi-Scale FNO (AMFNO) captures local to global dynamics by jointly modeling fine and coarse spectral modes, improving long-range coupling while reducing Fourier truncation bias.
Global Conditioning KAN (G-KAN) extracts a low-dimensional global representation of the system’s temporal and parametric state, acting as a latent controller for the spatial operator.
Global Modulation Layer (GML) fuses the AMFNO spatial features with the G-KAN global context, introducing a nonlocal, input-conditioned coupling that turns the spectral trunk into a context-aware integral operator.
Figure~\ref{teaser} illustrates the overall architecture.
Mathematically, we define the neural operator as: \begin{equation}
    G(u)(x) =
\underbrace{f(u)}_{\text{multi-scale spectral operator}} + \underbrace{A_{\Phi_{\mathrm{KAN}}(u)}[f(u)](x)}_{\text{input-conditioned attention operator}} 
\end{equation}

where the global KAN token is $\Phi_{\mathrm{KAN}}(u) = t$, and the input-conditioned attention operator is defined by $A_t[u](x)
=
\sum_{y \in D}
\alpha(x,y;t)\, u(y)$ ,
with attention weights
$\alpha(x,y;t)
=
\mathrm{softmax}\!\left(
\frac{ Q(t) \cdot K(f(y)) }{d}
\right)$.

\subsection{Adaptive Multi-Scale FNO (AMFNO) }
\label{sec:multi-scale-fno}
\paragraph{Spectral convolution with gated axial spectra.}
Each Fourier layer applies separable spectral mixing along both spatial axes.
Let $v\in\mathbb{R}^{X\times Y\times d}$ be a feature field in spatial coordinate domain where X,Y: number of spatial grid points along x and y and d is the number of feature channels. Our goal is to perform a spectral convolution that separately along each axis, with gating weights to modulate frequency responses dynamically.

We first apply the lifting and input embedding. Then we perform a separable 1D Fourier transform. We first consider the $y$-axis. For each fixed $x_i$, we take a one-dimensional discrete Fourier transform (DFT) applied channel-wise along the $Y$ samples of $v$: $\hat{v}_y(x_i, k_2, c) = \sum_{j=0}^{Y-1} v(x_i, y_j, c) \, e^{-2\pi i j k_2 / Y}, k_2 = 0, 1, \ldots, Y-1$,
for each channel $c=1,\ldots,d$. Then we obtain a complex-valued tensor $\hat{v}_y \in \mathbb{C}^{X \times Y \times d}$. However, we retain only the lowest $m$ Fourier modes, i.e.
$k_2 = 0,\ldots,m-1$.
Thus, the truncated representation becomes $\hat{v}_y \in \mathbb{C}^{X \times m \times d}$.
The spectral convolution acts per frequency mode as a complex matrix multiplication across feature channels.  
Let $\hat{v}_y(x_i, k_2) \in \mathbb{C}^d$ denote the feature vector at mode $k_2$.  
Then, the spectral mixing is defined as $\tilde{v}_y(x_i, k_2) = W^{(y)}(k_2) \, \hat{v}_y(x_i, k_2),
\quad W^{(y)}(k_2) \in \mathbb{C}^{d \times d}$.
Each $W^{(y)}(k_2)$ is a complex-valued linear operator that learns how the $d$ channels interact at that frequency.

We apply gating mechanism to make the spectral response adaptive and frequency-aware, we multiply each mode by a learnable real-valued scalar gate 
$\gamma_y(k_2) \in \mathbb{R}$.  
Hence, the gated spectral convolution along the $y$-direction becomes:
$\hat{v}_y'(x_i, k_2) 
= \gamma_y(k_2) \, W^{(y)}(k_2) \, \hat{v}_y(x_i, k_2),
\quad k_2 = 0, 1, \ldots, Y-1$.
Equivalently, in component-wise form across channels:
$\hat{v}_{y,\ell}'(x_i, k_2)
= \gamma_y(k_2) 
\sum_{j=1}^{d} 
W^{(y)}_{\ell j}(k_2) \, 
\hat{v}_{y,j}(x_i, k_2), \ell = 1, \ldots, d$.
Then we get the following equations:
$\hat{v}_y(k_2)
\;\mapsto\;
\sum_{j=1}^{d}
\hat{v}_{y,j}(k_2) \,
W^{(y)}_{j \to \cdot}(k_2) \,
\gamma_y(k_2),
\quad k_2 = 0, 1, \ldots, Y-1$.
Here $W^{(y)}_{j \to \cdot}(k_2)$ denotes the vector of weights mapping the
$j$-th input channel to all output channels at Fourier mode $k_2$. Finally, inverser fourier tranfomer was applied. We then repeat the entire process for x directions.

Then the Squeeze--Excitation block performs adaptive channel re-scaling based on global statistics
of the current feature map $v_{\mathrm{spec}}\in\mathbb{R}^{X\times Y\times d}$.
First, a \emph{squeeze} operation computes a global descriptor
for each channel via average pooling:
\begin{equation}
s_c = \frac{1}{XY}\sum_{i=1}^{X}\sum_{j=1}^{Y}
v_{\mathrm{spec}}(x_i, y_j, c),
\qquad c=1,\dots,d.
\end{equation}
These aggregated descriptors $s = [s_1,\ldots,s_d]\in\mathbb{R}^d$
summarize the global activation of each channel.
Then, an \emph{excitation} operation learns per-channel modulation weights
through a small gating network:
\begin{equation}
a = \sigma\!\big(W_2\,\delta(W_1\,s)\big),
\qquad a\in(0,1)^d,
\end{equation}
where $W_1\in\mathbb{R}^{d\times \frac{d}{r}}$ and
$W_2\in\mathbb{R}^{\frac{d}{r}\times d}$ are the learnable parameters, $\delta(\cdot)$ is the ReLU activation,
and $\sigma(\cdot)$ is the sigmoid function.
Finally, the feature map is reweighted channel-wise:
\begin{equation}
\mathrm{SE}(v_{\mathrm{spec}})(x_i,y_j,c)
= a_c \cdot v_{\mathrm{spec}}(x_i,y_j,c),
\end{equation}
thereby adaptively emphasizing or suppressing channels
depending on their global relevance.

\paragraph{Fine-scale operator.}
To capture high-resolution local dynamics, we stack $L$ gated spectral convolution layers,
each consisting of an axial Fourier operator followed by a pointwise nonlinearity.
This defines the fine-scale Fourier neural operator
$\mathcal{F}_{\mathrm{fine}}
\;=\;
\mathcal{K}_L \circ \sigma \circ \cdots \circ \mathcal{K}_1 ,
\label{eq:fine_operator}$
where each $\mathcal{K}_\ell$ denotes a gated spectral integral operator acting on
fields in $L^2(D;\mathbb{R}^d)$.
Given an input field $u$ and spatial grid $g$, the fine-scale representation is
\[
f_{\mathrm{fine}} = \mathcal{F}_{\mathrm{fine}}(u, g)
\in \mathbb{R}^{X \times Y \times C_{\mathrm{fno}}}.
\]

\paragraph{Coarse branch and fusion.}
To enlarge the receptive field and capture long-range interactions,
we introduce a second Fourier operator acting on a coarse spatial grid.
Let $\mathcal{D}_s$ denote spatial downsampling by factor $s$ and
$\mathcal{U}_s$ the corresponding upsampling operator.
The coarse-scale operator is defined as
\begin{equation}
f_{\mathrm{coarse}}
=
\mathcal{U}_s\!\left(
\mathcal{F}_{\mathrm{coarse}}\!\left(
\mathcal{D}_s u, \mathcal{D}_s g
\right)
\right)
\in \mathbb{R}^{X \times Y \times C_{\mathrm{fno}}}.
\label{eq:coarse_operator}
\end{equation}
The final multi-scale representation is obtained via additive fusion:
\begin{equation}
f(u)
=
f_{\mathrm{fine}} + f_{\mathrm{coarse}}.
\label{eq:multiscale_fusion}
\end{equation}

AMFNO replaces joint spectral mixing with axis-wise separable Fourier operators, reducing memory usage and improving computational efficiency.
This design is particularly effective for \emph{anisotropic PDEs} (e.g., advection--diffusion, Navier--Stokes, heat transport), where dynamics differ across spatial directions.
In the Fourier domain, such anisotropy manifests as direction-dependent energy distributions over $(k_x, k_y)$. By learning separate spectral operators per axis, AMFNO enables direction-specific frequency modeling, which is crucial for capturing anisotropic turbulence, shear flows, and structured diffusion.

\subsection{ Global Conditioning KAN}

\paragraph{KAN-based Global Token Conditioning.}
To enable global, input-conditioned modulation of the spatial operator, we introduce
a KAN-based global token that encodes temporal, parametric, and system-level context
into a compact latent representation.
This token serves as a conditioning variable that governs how global information
is injected into the spatial dynamics via attention.

\paragraph{Global context extraction.}
Let $u_{0:\tau_{\mathrm{in}}}\in\mathbb{R}^{X\times Y\times \tau_{\mathrm{in}}C}$
denote the recent spatio-temporal history of the PDE solution.
We first extract a compact global summary $z = \mathcal{P}\!\left(u_{0:\tau_{\mathrm{in}}}\right)
\in \mathbb{R}^{d_0},
\qquad d_0=\tau_{\mathrm{in}}C$,
where $\mathcal{P}$ denotes a pooling or sampling operator
(e.g., center-pixel extraction or spatial averaging).
This vector captures coarse temporal trends and global state information
that influence the evolution of the solution across the entire domain.

\paragraph{Kolmogorov--Arnold Network (KAN) encoder.}
We map the global summary $z$ into a low-dimensional token
$t\in\mathbb{R}^{C_{\mathrm{kan}}}$ using a Kolmogorov--Arnold Network (KAN),
which represents multivariate functions as compositions of sums of univariate functions.
A KAN of depth $L$ and shape $[d_0,d_1,\dots,d_L]$ defines the mapping
\[
t
=
\Phi_{\mathrm{KAN}}(z)
=
\left(\Phi_{L-1}\circ\cdots\circ\Phi_0\right)(z),
\qquad t\in\mathbb{R}^{d_L}.
\]

Each KAN layer $\Phi_\ell:\mathbb{R}^{d_\ell}\to\mathbb{R}^{d_{\ell+1}}$
is defined component-wise as
\begin{equation}
x^{(\ell+1)}_j
=
\sum_{i=1}^{d_\ell}
\phi^{(\ell)}_{j,i}\!\left(x^{(\ell)}_i\right),
\qquad
j=1,\dots,d_{\ell+1},
\label{eq:kan-layer}
\end{equation}
where $\phi^{(\ell)}_{j,i}:\mathbb{R}\to\mathbb{R}$
are learnable univariate edge functions.

\paragraph{Spline-based edge parametrization.}
Following the standard KAN formulation, each edge function
$\phi^{(\ell)}_{j,i}$ is parameterized as a residual B-spline expansion:
\begin{equation}
\phi^{(\ell)}_{j,i}(x)
=
w^{(\ell)}_{b,j,i}\, b(x)
+
w^{(\ell)}_{s,j,i}
\sum_{r=1}^{G+k}
c^{(\ell)}_{j,i,r}\,
B^{(k)}_{r}(x;\,\Xi^{(\ell)}_i),
\label{eq:kan-spline}
\end{equation}
where: $b(x)=\mathrm{SiLU}(x)$ is a fixed smooth base function, $B^{(k)}_{r}(\cdot;\Xi^{(\ell)}_i)$ are order-$k$ B-spline basis functions defined on a learnable knot grid $\Xi^{(\ell)}_i$, $c^{(\ell)}_{j,i,r}$ are spline coefficients, $w^{(\ell)}_{b,j,i}, w^{(\ell)}_{s,j,i}$ are learnable scaling parameters.

This construction yields a smooth, piecewise-polynomial representation
with explicit control over regularity and approximation capacity.

\paragraph{Approximation and interpretability.}
By classical spline approximation theory, the B-spline expansion in
\eqref{eq:kan-spline} can approximate any univariate function
$f\in C^{k+1}([a,b])$ with error
\[
\left\| f - \sum_r c_r B^{(k)}_r(\cdot;\Xi) \right\|_\infty
=
\mathcal{O}(h^{k+1}),
\]
where $h$ denotes the knot spacing.
Consequently, $\Phi_{\mathrm{KAN}}$ provides a smooth, interpretable,
and low-rank embedding of global dynamical information,
with the learned knot locations $\Xi$ offering direct insight
into which regimes of the input are most relevant.

\paragraph{Global token as operator conditioning.}
The token \(t=\Phi_{\mathrm{KAN}}(z)\) does not directly predict the spatial field, but instead serves as a conditioning variable for the spatial operator. Specifically, \(t\) parameterizes the query in a cross-attention module, defining an input-conditioned integral kernel
\[
\kappa_t(x,y)
=
\operatorname{softmax}_y\!\left(
\frac{Q(t)\,K(f(y))^\top}{\sqrt{d_a}}
\right),
\]
which acts on the multi-scale Fourier features. This mechanism injects global temporal and parametric context into all spatial locations, allowing the operator to adapt to the system state. This global KAN conditioning complements the translation-invariant spectral kernels of the multi-scale FNO: while FNO captures stationary spatial structure, the KAN token modulates the operator in a data-dependent manner, enabling non-stationary dynamics, regime changes, and global constraints beyond spectral convolution alone.

\subsection{Global Modulation Layer}
Global Modulation Layer (GML) as an input-conditioned integral operator.
Let $D\subset\mathbb{R}^2$ denote the spatial domain and let 
$f(u):D\to\mathbb{R}^{C_{\mathrm{fno}}}$ be the multi-scale Fourier feature field produced by AMFNO,
and $t=\Phi_{\mathrm{KAN}}(u)\in\mathbb{R}^{C_{\mathrm{kan}}}$ be the global token produced by G-KAN.
The role of GML is to fuse global context $t$ with local Fourier features $f(u)$ by constructing an
\emph{input-conditioned} nonlocal operator acting on the field. We define the attention-based component as a kernel integral operator
\begin{equation}
\label{eq:gml_integral}
(A_t[u])(x)
=
\int_{D} \kappa_t(x,y)\, \mathcal{V}\!\big(f(u)(y)\big)\,dy,
\qquad x\in D,
\end{equation}
where $\mathcal{V}:\mathbb{R}^{C_{\mathrm{fno}}}\to\mathbb{R}^{d_a}$ is the value projection (implemented by a linear map),
and $\kappa_t(x,y)$ is an input-dependent kernel induced by cross-attention. The attention kernel is then defined as a normalized exponential similarity:
\begin{equation}
\label{eq:kernel_softmax_continuous}
\kappa_t(x,y)
=
\frac{
\exp\!\left(\frac{\langle q(\Phi_{\mathrm{KAN}}(t)),\, k(y)\rangle}{\sqrt{d_a}}\right)
}{
\int_{D}
\exp\!\left(\frac{\langle q(\Phi_{\mathrm{KAN}}(t)),\, k(\xi)\rangle}{\sqrt{d_a}}\right)\, d\xi
}.
\end{equation}

On a grid $\{y_j\}_{j=1}^{N}$ with $N=XY$, the integral in~\eqref{eq:gml_integral}
is approximated by quadrature (Riemann sum):

\begin{equation}
\begin{aligned}
(A_t[u])(x_i)
&\approx
\sum_{j=1}^{N}
\alpha_{ij}(t)\, \mathcal{V}\!\big(f(u)(y_j)\big), \\
\alpha_{ij}(t)
&=
\frac{
\exp\!\left(\frac{\langle q(t),\, k(y_j)\rangle}{\sqrt{d_a}}\right)
}{
\sum_{\ell=1}^{N}
\exp\!\left(\frac{\langle q(t),\, k(y_\ell)\rangle}{\sqrt{d_a}}\right)
}.
\end{aligned}
\end{equation}

which matches the standard softmax($QK^\top$)$V$ attention.

\paragraph{Broadcast fusion and pointwise readout.}
The global attention output in~\eqref{eq:gml_integral} is broadcast back to all spatial locations
and fused with the local Fourier features via a pointwise readout map $\mathcal{R}$ (a $1\times 1$ Conv-MLP): $\hat{u}(x) = \mathcal{R}\!\left(\, f(u)(x)\,,\, A_t[u](x)\,\right),
\qquad x\in D$,
yielding the final context-aware prediction field.

Combining AMFNO and GML, the SKAN neural operator can be summarized as
\begin{equation}
\label{eq:skan_summary}
G(u)(x)
= {f(u)(x)} + A_{\Phi_{\mathrm{KAN}}(u)}[f(u)](x),
\qquad x\in D,
\end{equation}
where $A_{\Phi_{\mathrm{KAN}}(u)}$ is the input-conditioned integral operator defined by
\eqref{eq:gml_integral}--\eqref{eq:kernel_softmax_continuous}.

\begin{table*}[t!]
  \centering
  \caption{ Performance comparison among our method and other representative baselines via different datasets. The best-performing results are highlighted in \colorbox[HTML]{BBFFBB}{green}, while the second-best results are indicated in \Sam{blue} colour.}
  \resizebox{\textwidth}{!}{
    \begin{tabular}{lccccc}
      \toprule
      \textbf{Method} 
      & \textbf{CNS (1D)} 
      & \textbf{Diffusion--Reaction (1D)} 
      & \textbf{Darcy Flow (2D)} 
      & \textbf{Shallow Water (2D)} 
      & \textbf{Climate Modelling (2D)} \\
      \midrule

      U-Net (2015)
      & 12.56934 
      & 0.03812 
      & \Sam{0.01315} 
      & 0.11974 
      & 1.01260 \\

     FNO (2021)
      & 0.34053 
      & 0.02884 
      & 0.02575 
      & 0.01448 
      & 0.01536 \\

      FFNO (2023)
      & \Sam{0.29493} 
      & 0.01409 
      & 0.02482 
      & 0.01191 
      & -- \\

      LNO (2024)
      & 8.97545 
      & 0.04703
      & 0.03538 
      & 0.03938 
      & - \\

      Transolver (2024)
      & 0.72094 
      & \Sam{0.00686} 
      & 0.01987 
      & 0.00586 
      & 0.18575 \\

      LocalFNO (2024)
      & -- 
      & -- 
      & 0.02051 
      & \Sam{0.00438} 
      & \Sam{0.01508} \\

      DeepOKAN (2025)
      & 9.21355 
      & 0.04387 
      & 0.01750
      & 0.08618
      & 0.036257 \\

      CViT (2025)
      &  0.62194
      &  0.00678
      &  0.02287
      &  0.01178
      &  0.024066\\

      \midrule
      \textbf{ SpectraKAN (Ours)} 
      & \cellcolor[HTML]{BBFFBB}\textbf{0.14544} 
      & \cellcolor[HTML]{BBFFBB}\textbf{0.00397} 
      & \cellcolor[HTML]{BBFFBB}\textbf{0.01260} 
      & \cellcolor[HTML]{BBFFBB}\textbf{0.0024} 
      & \cellcolor[HTML]{BBFFBB}\textbf{0.0096119} \\

      \midrule
      \textbf{Improvement (\%)} 
      & 49\% 
      & 42\% 
      & 4.18\% 
      & 45.21\% 
      & 36.26\% \\

      \bottomrule
    \end{tabular}
  }
  \label{tab:main_rmse}
\end{table*}

\begin{table}[t]
\centering
\caption{Relative $L^2$ errors on the Shallow Water Equations across different . Best results are highlighted in \colorbox[HTML]{BBFFBB}{green}}
\label{tab:shallow_water_2d_mean}
\resizebox{\columnwidth}{!}{
\begin{tabular}{lccc}
\toprule
Method & $\rho$ & $u$ & $v$ \\
\midrule
U-Net  & $1.20\times10^{-2}$ & $3.62\times10^{-1}$ & $3.63\times10^{-1}$ \\
FNO    & \Sam{$1.15\times10^{-3}$} & \Sam{$3.20\times10^{-2}$} & $3.33\times10^{-2}$ \\

LNO    & $1.85\times10^{-3}$ & $6.77\times10^{-2}$ & $7.14\times10^{-2}$ \\
Trasnolver & $2.41\times10^{-2}$ & $1.00\times10^{0}$ & $1.00\times10^{0}$ \\
DeepOKAN & $2.35\times10^{-2}$ & $1.01\times10^{0}$ & \Sam{$2.88\times10^{-2}$} \\
CViT & $2.38\times10^{-2}$ & $1.00\times10^{0}$ & $1.00\times10^{0}$ \\
SpectraKAN (Ours)  & \cellcolor[HTML]{BBFFBB}\textbf{$\mathbf{6.85\times10^{-4}}$} & \cellcolor[HTML]{BBFFBB}\textbf{$\mathbf{1.91\times10^{-2}}$} & \cellcolor[HTML]{BBFFBB}\textbf{$\mathbf{2.04\times10^{-2}}$} \\
\bottomrule
\end{tabular}
}
\label{swrl2}
\end{table}

\begin{figure*}[t]
    \centering
    \includegraphics[width=\textwidth]{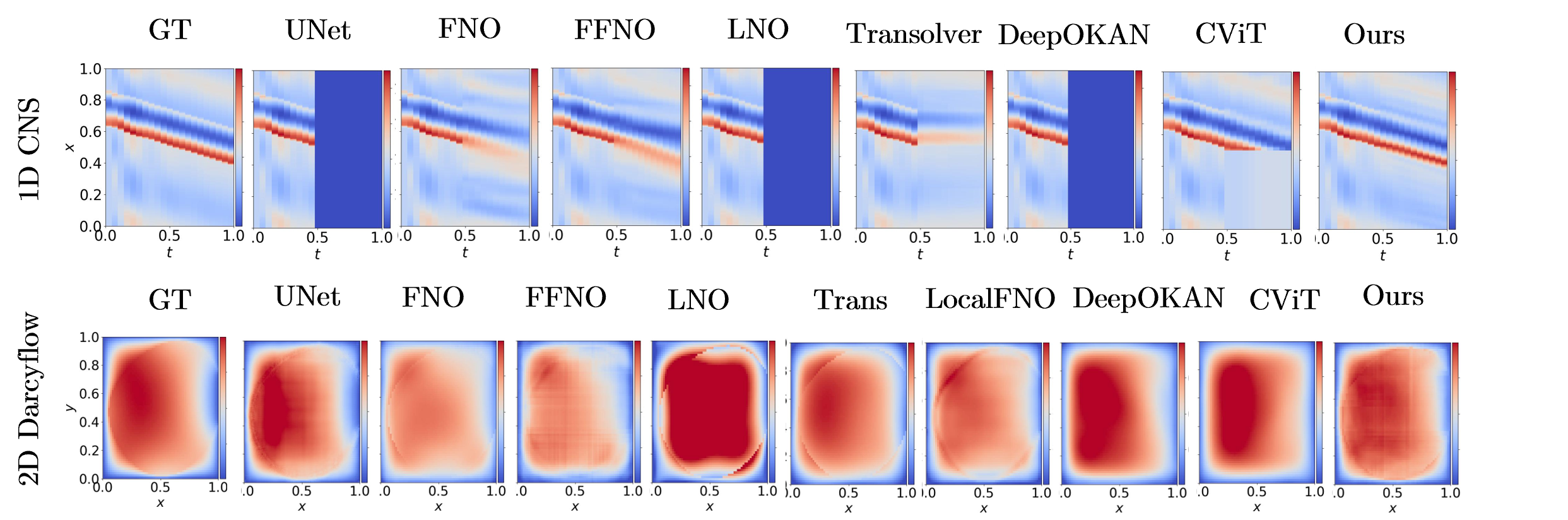}
    \caption{ Top: 1D compressible Navier–Stokes solution evolution (space–time map). Bottom: 2D Darcy flow solution snapshot. While several baselines recover the coarse structure, they often suffer from smoothing or artifacts; our method produces predictions that most closely resemble the ground truth across both settings.}
    \label{fig:fu1}
\end{figure*}

\subsection{Theoretical foundation}

In this section, we provide the theoretical foundation of our method, including its resolution invariance and the choice of the Global KAN token. We also show in the Appendix \ref{theosupp} that SpectraKAN defines a neural operator between function spaces.

\begin{lemma}[Edge Lipschitz bound for spline KAN]
Consider an edge function of the form $\phi(x) = w_b\, b(x) + w_s \sum_{r} c_r\, B_r^{(k)}(x;\Xi)$,
where $b \in C^1$, and $B_r^{(k)}$ are order-$k$ B-splines defined on a knot grid $\Xi = \{t_j\}$,
with minimum knot spacing $h_{\min} := \min_j (t_{j+1} - t_j) > 0$.
Assume that $\sup_x |b'(x)| \le L_b$. Then $\phi$ is Lipschitz continuous, and its Lipschitz constant satisfies $\operatorname{Lip}(\phi)
\le |w_b|\, L_b
+ |w_s|\, \frac{2}{h_{\min}}\, \|c\|_1$.
\label{lemm}
\end{lemma}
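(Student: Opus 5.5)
The plan is to bound $\operatorname{Lip}(\phi)$ by the triangle inequality, $\operatorname{Lip}(\phi)\le |w_b|\operatorname{Lip}(b) + |w_s|\operatorname{Lip}(S)$ with $S(x)=\sum_r c_r B_r^{(k)}(x;\Xi)$, and to control each term through a uniform bound on its derivative. The base term is immediate: $b\in C^1$ with $\sup|b'|\le L_b$, so the mean value theorem gives $|b(x)-b(y)|\le L_b|x-y|$.

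For the spline term I will use the standard derivative recursion for B-splines. In the normalized convention,
\[
\frac{d}{dx}B_r^{(k)}(x)=\frac{p}{t_{r+p}-t_r}B_r^{(k-1)}(x)-\frac{p}{t_{r+p+1}-t_{r+1}}B_{r+1}^{(k-1)}(x),
\]
where $p$ is the polynomial degree, valid away from knots. At knots, $S$ is continuous and piecewise $C^1$ (for degree $\ge1$), so a one-sided-derivative bound still yields a Lipschitz bound. The lower-order B-splines satisfy $0\le B^{(k-1)}\le 1$. Each denominator $t_{r+p}-t_r$ spans $p$ consecutive gaps, so it is at least $p\,h_{\min}$. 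Hence each coefficient $p/(t_{r+p}-t_r)$ is at most $1/h_{\min}$, and
\[
\Bigl|\tfrac{d}{dx}B_r^{(k)}(x)\Bigr|\le \tfrac{1}{h_{\min}}\bigl(B_r^{(k-1)}(x)+B_{r+1}^{(k-1)}(x)\bigr)\le \tfrac{2}{h_{\min}}.
\]
Summing gives $|S'(x)|\le \sum_r |c_r|\,|{B_r^{(k)}}'(x)|\le \frac{2}{h_{\min}}\|c\|_1$. Combining the two terms produces exactly the claimed constant $|w_b|L_b+|w_s|\frac{2}{h_{\min}}\|c\|_1$.

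The main obstacle is matching conventions. I need the B-spline normalization (partition of unity, values in $[0,1]$) and the meaning of ``order $k$'' to line up, so that the factor $p$ in the recursion cancels against the $p$ gaps in the denominator. I will state this cancellation explicitly.

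Two degenerate cases need separate handling:
\begin{itemize}
\item \emph{Piecewise constant splines.} These are discontinuous, so the lemma implicitly requires degree $\ge1$, and I will assume it.
\item \emph{Coincident knots.} These are excluded by $h_{\min}>0$.
\end{itemize}

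Outside the knot span, all $B_r^{(k)}$ vanish, so $S$ is identically zero there and the derivative bound holds trivially. Because $S$ is continuous, the global Lipschitz bound then follows from the piecewise derivative bound.
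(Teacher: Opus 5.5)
Your proposal is correct and follows essentially the same route as the paper: the B-spline derivative recursion, denominators bounded below by $k\,h_{\min}$ so that each $|{B_r^{(k)}}'|\le 2/h_{\min}$, then summation against $\|c\|_1$, the triangle inequality, and the mean value theorem. Your write-up is slightly more careful than the paper's. The paper uses the single denominator $t_{r+k}-t_r$ for both terms of the recursion, which is exact only for uniform knots, and it does not address kinks at knots or the degree-zero case.
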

The complete proof is in the appendix \ref{theosupp}. We emphasize that it is \emph{not} introduced to increase the spatial expressivity of the neural operator. Instead, KAN parameterizes a low-dimensional \emph{global modulation} pathway whose sensitivity can be explicitly controlled. Lemma \ref{lemm} provides a bounding and it provides an inspectable mechanism for regularizing global conditioning (e.g., via $\|c\|_{1}$ or knot spacing) and ensures that the global token varies smoothly with the input. Coupled with our single-query attention formulation, the resulting modulation is spatially uniform  (broadcast across $x$) and therefore cannot introduce new spatial degrees of freedom. Instead, it acts as a stable, Lipschitz-controlled controller that adaptively adjusts the spectral trunk across regimes, improving long-horizon rollout stability.

Then we further showed that KAN gives smooth, Lipschitz-controlled global modulation.

\begin{theorem}[Smooth, Lipschitz-controlled global modulation]
\label{theorem}
Assume Bounded query/key, Lipschitz pieces and KAN spline regularity. Then the global modulation map
$m : L^{\infty}(\mathcal{D}) \;\to\; \mathbb{R}^{d_m}$
is Lipschitz continuous, satisfying $\| m(u) - m(\tilde{u}) \|_2
\;\le\;
L_m \, \| u - \tilde{u} \|_{L^\infty}$,
where $L_m$ is an explicit constant depending on
$L_p,\; L_{\mathrm{KAN}},\; L_k,\; L_v,\; M_q,\; M_k,\; |\mathcal{D}|$,
as well as the exponential ``temperature'' scaling $1/\sqrt{d_k}$. In particular, decreasing $L_{\mathrm{KAN}}$ (e.g., via spline derivative control) directly reduces $L_m$.

\end{theorem}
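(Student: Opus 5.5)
The plan is to write the global modulation map as a composition and chain Lipschitz bounds through it. Concretely,
\[
m(u) = \sum_{y} \alpha_y(u)\, \mathcal{V}(f(u)(y))\,|\Delta y|
\]
(or its continuous analogue $\int_{\mathcal D}\kappa_t(y)\,v(y)\,dy$), where $t=\Phi_{\mathrm{KAN}}(\mathcal P(u))$, $q=q(t)$, $k(y)=k(f(u)(y))$ and $v(y)=\mathcal V(f(u)(y))$.

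I will make the assumptions explicit. The pooling $\mathcal P$ is $L_p$-Lipschitz from $L^\infty$ to $\mathbb{R}^{d_0}$. $\Phi_{\mathrm{KAN}}$ is $L_{\mathrm{KAN}}$-Lipschitz, which follows from Lemma~\ref{lemm}: each layer \eqref{eq:kan-layer} has Lipschitz constant at most $\sqrt{d_\ell d_{\ell+1}}\max_{j,i}\operatorname{Lip}(\phi^{(\ell)}_{j,i})$, and composing layers gives the product. The query map $q$ is $L_q$-Lipschitz, and the key and value maps (including the trunk $f$) are $L_k$- and $L_v$-Lipschitz pointwise in $L^\infty$. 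Finally, $\|q\|\le M_q$ and $\|k(y)\|\le M_k$, and I will also take $\|v\|_\infty\le M_v$, either by boundedness on the input class or absorbed into the constant.

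The core step is a perturbation bound for the softmax-weighted average. Write the scores $s(y)=\langle q,k(y)\rangle/\sqrt{d_k}$ and split
\[
m(u)-m(\tilde u)=\int(\kappa-\tilde\kappa)\,v + \int\tilde\kappa\,(v-\tilde v).
\]
The second term is at most $\|v-\tilde v\|_\infty\le L_v\|u-\tilde u\|_\infty$, since $\tilde\kappa$ is a probability density. For the first term I use the standard fact that softmax is 1-Lipschitz from $L^\infty$ scores to total variation, up to a factor of 2:
\[
\|\kappa-\tilde\kappa\|_{L^1}\le 2\|s-\tilde s\|_\infty.
\]
Then
\[
\|s-\tilde s\|_\infty \le \frac{M_k\|q-\tilde q\|+M_q\|k-\tilde k\|_\infty}{\sqrt{d_k}}
\le \frac{M_k L_q L_{\mathrm{KAN}} L_p + M_q L_k}{\sqrt{d_k}}\,\|u-\tilde u\|_\infty.
\]
Alternatively, if I avoid the total-variation lemma, the scores lie in $[-M_qM_k/\sqrt{d_k},\,M_qM_k/\sqrt{d_k}]$. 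The exponentials are then bounded above and below, so the normalizer is at least $|\mathcal D|e^{-M_qM_k/\sqrt{d_k}}$. Differentiating the quotient yields a constant of the form $e^{2M_qM_k/\sqrt{d_k}}/\sqrt{d_k}$ times $|\mathcal D|$ factors. This is where the explicit temperature dependence and the $|\mathcal D|$ dependence enter the statement.

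Collecting terms gives
\[
L_m = L_v + \frac{2M_v}{\sqrt{d_k}}\big(M_k L_q L_{\mathrm{KAN}} L_p + M_q L_k\big),
\]
possibly with exponential/$|\mathcal D|$ factors depending on the route. $L_m$ is affine and increasing in $L_{\mathrm{KAN}}$, which proves the final claim: shrinking $\|c\|_1$ or enlarging $h_{\min}$ lowers $L_{\mathrm{KAN}}$ through Lemma~\ref{lemm}, and hence lowers $L_m$.

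The main obstacle is the softmax step. I must make sure the Lipschitz constant is uniform, which is exactly why the boundedness of queries and keys is needed; the discrete quadrature version has to be handled with the same bound independent of $N$. A related subtlety is that Lemma~\ref{lemm} holds only on the spline domain. I will assume inputs stay within the knot range, or else that outside the grid the splines are extended constantly, so that only the SiLU term contributes there.
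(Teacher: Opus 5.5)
Your proposal is correct, but your main argument is not the paper's. The paper follows essentially the route you list as the ``alternative.'' It works with the unnormalized weights $w_u=e^{s_u}$ and bounds $|s_u|\le M_s:=M_qM_k/\sqrt{d_k}$. It then uses the Lipschitz constant $e^{M_s}$ of $\exp$ on $[-M_s,M_s]$, estimates the numerator $N_u=\int w_uv_u$ and the normalizer $Z_u=\int w_u$ separately, and closes with the quotient identity and $Z_u\ge|\mathcal D|e^{-M_s}$.

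You instead split $m(u)-m(\tilde u)=\int(\kappa-\tilde\kappa)v+\int\tilde\kappa(v-\tilde v)$ and use the $L^\infty\to L^1$ Lipschitz property of softmax, with the same score-difference estimate. This buys a sharper constant. The paper's steps give $e^{2M_s}L_v+(e^{2M_s}+e^{4M_s})\frac{M_v}{\sqrt{d_k}}(M_kL_qL_{\mathrm{KAN}}L_p+M_qL_k)$, in which the $|\mathcal D|$ factors actually cancel. Your $L_m$ is exactly the $M_s\to0$ limit of that expression. In your route, boundedness of $q,k$ is needed only to control $\|s-\tilde s\|_\infty$, not to keep the normalizer away from zero. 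Your bound also holds for any reference measure, hence for the discrete softmax uniformly in $N$. The paper's route, in exchange, is completely elementary and needs no auxiliary lemma.

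Since the softmax lemma carries your argument, prove it rather than cite it. With $h=s-\tilde s$ and $\kappa_\lambda\propto e^{\tilde s+\lambda h}$, one has $\partial_\lambda\kappa_\lambda=\kappa_\lambda(h-\mathbb{E}_{\kappa_\lambda}h)$. Hence $\|\kappa-\tilde\kappa\|_{L^1}\le\sup_\lambda\mathbb{E}_{\kappa_\lambda}|h-\mathbb{E}_{\kappa_\lambda}h|\le 2\|h\|_\infty$. The quick pointwise ratio bound alone gives only $e^{2\|h\|_\infty}-1$, not a linear bound. Also drop the factor $|\Delta y|$ from your discrete formula, since the softmax weights already sum to one.
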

The complete proof is in the appendix and  Theorem \ref{theorem} showed that KAN is a smooth global controller: it produces a low-dimensional token that changes the global modulation continuously with bounded sensitivity, which is beneficial for stable autoregressive PDE rollouts.

\begin{proposition}
(Informal)
The discrete attention output $A_h$ converges to the continuous output $A$ as $h \to 0$, and for sufficiently small $h$, $\| A_h - A \| \le \frac{\sqrt{d}\, h}{Z} \big( L_g + \|A\| L_w \big)$.
In particular, $\| A_h - A \| = O(h)$, as the grid is refined.
\label{prof}
\end{proposition}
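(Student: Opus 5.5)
We treat the continuous attention output as a ratio of two integrals and the discrete output as the same ratio with both integrals replaced by Riemann sums on a grid of mesh size $h$. With $s(y)=\langle q(t),k(y)\rangle/\sqrt{d_a}$, set $w(y)=\exp(s(y))$ and $g(y)=w(y)\,\mathcal{V}(f(u)(y))\in\mathbb{R}^{d}$. Then
\[
A=\frac{N}{Z},\qquad N=\int_D g(y)\,dy,\qquad Z=\int_D w(y)\,dy .
\]
On a uniform grid with cells $C_j$ of measure $|C_j|$ and centers $y_j$, the normalised softmax weights $\alpha_{ij}$ make the discrete output equal to
\[
A_h=\frac{N_h}{Z_h},\qquad N_h=\sum_j g(y_j)\,|C_j|,\qquad Z_h=\sum_j w(y_j)\,|C_j| ,
\]
since the factor $|C_j|$ cancels between numerator and denominator. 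The statement is informal, so we adopt these standing assumptions:
\begin{itemize}
\item $w$ is Lipschitz with constant $L_w$;
\item each component of $g$ is Lipschitz with constant $L_g$;
\item $D$ is bounded, so $Z\ge |D|\,e^{-M_qM_k/\sqrt{d_a}}>0$.
\end{itemize}
The Lipschitz properties follow from bounded, Lipschitz keys and values together with bounded queries, using the same ingredients as Theorem~\ref{theorem}. Unless $|D|=1$ is taken as normalisation, we also assume $\operatorname{diam}(C_j)\le h$, which is the meaning we give to the mesh size $h$.

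The first step is a quadrature estimate. On each cell, $|w(y)-w(y_j)|\le L_w h$, and summing over cells gives $|Z_h-Z|\le L_w h|D|$. Doing the same componentwise for $g$ gives $|N_h-N|_\infty\le L_g h|D|$, hence $\|N_h-N\|_2\le\sqrt d\,L_g h|D|$. The $\sqrt d$ in the target bound comes from this passage between componentwise and Euclidean norms. To match the stated constant we take $|D|=1$, or equivalently absorb $|D|$ into $L_g$ and $L_w$.

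The second step is a ratio perturbation argument, using the identity
\[
A_h-A=\frac{N_h-N}{Z_h}-\frac{N}{Z}\cdot\frac{Z_h-Z}{Z_h}=\frac{(N_h-N)-A\,(Z_h-Z)}{Z_h}.
\]
It follows that $\|A_h-A\|\le\bigl(\sqrt d L_g h+\|A\|\,\sqrt d L_w h\bigr)/Z_h$, where the $\sqrt d$ may be dropped on the $L_w$ term or kept; either choice is dominated by the stated bound.

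The main obstacle is that the denominator here is $Z_h$, while the statement uses $Z$. This is exactly where ``for sufficiently small $h$'' enters. From $|Z_h-Z|\le L_w h$ we get $Z_h\ge Z-L_wh$, which is at least $Z/2$ once $h\le Z/(2L_w)$. That yields the bound with $Z$ up to a factor $2$. To obtain precisely $1/Z$, we read the stated bound as leading order and write $1/Z_h=1/Z+O(h)$, so that $\|A_h-A\|\le \frac{\sqrt d h}{Z}(L_g+\|A\|L_w)+O(h^2)$. Either way $\|A_h-A\|=O(h)\to 0$. The convergence is uniform in $x$, because single-query attention makes $A$ independent of $x$, and the constants are uniform over inputs with bounded features.
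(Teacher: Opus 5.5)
Your route is the paper's route: write $A=N/Z$ and $A_h=N_h/Z_h$ as ratios of integrals and their midpoint Riemann sums, bound both quadrature errors by Lipschitz continuity, use $A_h-A=\bigl((N_h-N)-A(Z_h-Z)\bigr)/Z_h$, and force $Z_h\ge Z/2$ by taking $h$ small. The gap is the constant. You bound $\|y-y_j\|$ by the full cell diameter $h$, so the step $Z_h\ge Z/2$ leaves you with twice the stated bound. Your fallback of reading the claim as leading order does not close this. An estimate of the form $Ch+O(h^2)$ only yields $\|A_h-A\|\le (C+\varepsilon)h$ for small $h$, never $\|A_h-A\|\le Ch$, so the displayed inequality is not proved.

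The missing observation is that the $y_j$ are cell centres, so every $y\in C_j$ satisfies $\|y-y_j\|\le\frac12\operatorname{diam}(C_j)$. For a cube of side $h$ in $\mathbb{R}^d$ this is $\frac{\sqrt d}{2}h$, which is how the paper obtains $|Z-Z_h|\le\frac{\sqrt d}{2}L_w h$ and $\|N-N_h\|\le\frac{\sqrt d}{2}L_g h$. This factor $\frac12$ exactly cancels the $2$ from $ZZ_h\ge Z^2/2$, which is valid once $h\le Z/(L_w\sqrt d)$. The result is precisely $\frac{\sqrt d\,h}{Z}(L_g+\|A\|L_w)$. The same fix works in your conventions: halving both quadrature errors gives $\|A_h-A\|\le\frac{h}{Z}(\sqrt d\,L_g+\|A\|L_w)$ for $h\le Z/L_w$, which is dominated by the stated bound.

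You also read the symbols differently from the paper. There, $d$ is the spatial dimension ($D=[0,1]^d$, so $|D|=1$ is built in), $h$ is the side length of the grid cubes, and $L_g$ is the Lipschitz constant of $g$ in the Euclidean norm. The $\sqrt d$ is the half-diagonal factor of a cell, not a componentwise-to-Euclidean conversion on the value space $\mathbb{R}^{d_v}$. Your reading, with $h$ a diameter bound, componentwise Lipschitz constants, and $d$ the value dimension, is self-consistent and gives a bound of the same shape. However, it attaches the constant to different quantities than the paper does.
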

$L_g$
 measures how fast the integrand changes in space, and $L_w$ measures how fast the attention weights change in space. The discrete softmax attention aggregation is a consistent quadrature approximation of a continuous, normalized kernel integral operator.
Proposition \ref{prof} shows as the mesh refines, the discrete GML converges to a resolution-independent continuous operator, with a provable error rate.

\section{Experiments}
This section presents the experimental evaluation of our method. We first describe the datasets and implementation details, followed by quantitative comparisons with representative baselines and ablation studies.

\textbf{Dataset, Baselines and Training Details.} We evaluate our approach on a variety of different PDEs.
In auto regressive training, we choose from PDEBench, which covers 1D CNS, 1D Diffusion Reaction, 2D Darcy flow, and 2D Shallow water. In addition to PDEBench \cite{takamoto2022pdebench}, we use both 2D climate modelling and Shallow Water Equations from \cite{kissas2022learning}. Detailed descriptions of the dataset and preprocessing procedures are provided in the Appendix \ref{dataset}. We compared our baseline with the following three categories of baselines. (1) FNO and it's varient: FNO \cite{li2020fourier}, FFNO \cite{tran2021factorized} (2) Transformer-based method: Transovler \cite{wu2024transolver}, CViT \cite{wang2024cvit} (3) KAN based method: DeepOKAN \cite{abueidda2025deepokan}, (4)Other method: UNet \cite{ronneberger2015u}, LNO \cite{cao2024laplace}, Local FNO\citep{liu2024neural}. All methods were trained in a single NVIDIA A100 40GB GPU. 

\textbf{Main Results.}
We conducted a comprehensive comparison of our method with representative baselines. The results are summarised in Table~\ref{tab:main_rmse}, where we report the RMSE. Also, Table ~\ref{swrl2} was reported in Relative $L^2$.

\begin{figure}[htb]
    \centering
    \includegraphics[width=\columnwidth]{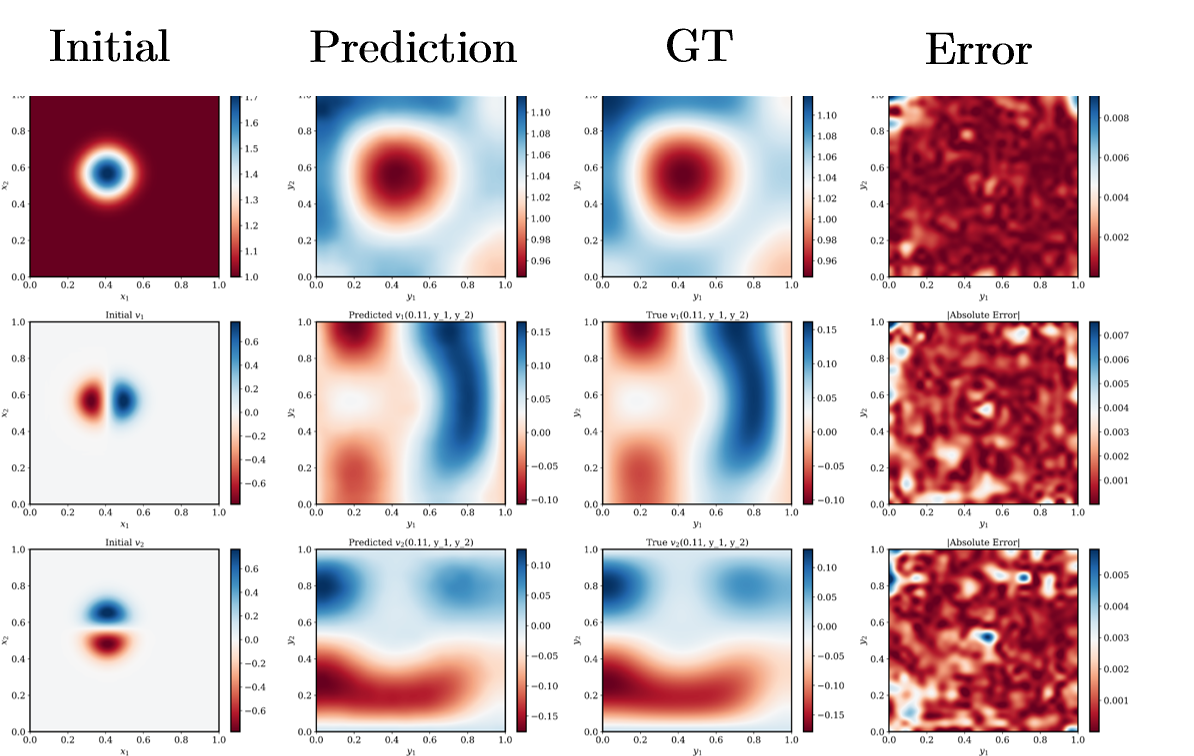}
    \caption{Qualitative comparison on the Shallow Water benchmark. Shown are the initial conditions, the predicted and ground-truth solutions, and the absolute error at 
    t=0.16s.}
    \label{fig:2dsw}
\end{figure}

Overall, our method achieves the best performance across all five benchmarks in Table~\ref{tab:main_rmse}, demonstrating consistent accuracy gains over strong recent baselines. In particular, on the 1D compressible Navier--Stokes (CNS) task, our approach reduces the RMSE from $0.29493$ (FFNO) to $0.14544$, corresponding to a $50.7\%$ relative improvement. For the 1D diffusion--reaction system, we obtain an RMSE of $0.00397$, improving upon the previous best result ($0.00686$) by $42.1\%$. On the 2D Darcy flow benchmark, where multiple methods already reach low errors, our model still provides a consistent gain, achieving $0.01260$ compared to $0.01315$. Notably, our largest improvements are observed on challenging spatio-temporal prediction tasks. On the 2D shallow-water benchmark, our method reaches $0.0024$ RMSE, yielding a $45.2\%$ reduction relative to the best competing baseline. Similarly, for 2D climate modelling, our approach attains $0.0096119$ RMSE, outperforming the strongest baseline ($0.01508$) by $36.3\%$. These results indicate that our method is not only competitive in the high-accuracy regime (e.g., Darcy flow), but also particularly effective at capturing complex dynamics in time-dependent systems. More numerical results can be foudn in Appendix \ref{exp}.

Table~\ref{swrl2} reports the other shallow water equations. Overall, our method achieves the lowest error across all components, consistently outperforming all other baselines. In particular, Our mehtod attains an error of $6.85\times10^{-4}$ on $\rho$, improving upon the strongest baseline FNO by $\approx 40.4\%$. Similar gains are observed for the velocity fields $u$ and $v$. We achieves the best performance on $v$ with $2.04\times10^{-2}$, corresponding to a $\approx 29.2\%$ reduction compared to the strongest baseline on $v$ (DeepOKAN, $2.88\times10^{-2}$). 

Figure~\ref{fig:fu1} compares qualitative results on the 1D CNS and 2D Darcy flow tasks. While baseline methods capture the coarse dynamics, they exhibit smoothing or spatial artifacts. Our method produces predictions closest to the ground truth, preserving sharper structures and physically consistent spatial patterns. Figure~\ref{fig:2dsw} shows a representative 2D shallow water result, where predictions at $t=0.11$ accurately match the ground-truth height and velocity fields, capturing the global flow structure with smooth spatial variations. More qualitative results can be found in Appendix \ref{vis}.

\begin{table}[t!]
\centering
\caption{Ablation study on architecture design, token construction, and resolution scaling. Best results are highlighted in \colorbox[HTML]{BBFFBB}{green} }
\label{tab:ablation_amfno}
\resizebox{0.9\linewidth}{!}{
\begin{tabular}{l c c c}
\toprule
\textbf{Method} 
& \textbf{Error $64^2$ } 
& $128^2$ 
& $32^2$ \\
\midrule

\multicolumn{4}{l}{\textbf{(a) Architecture Ablation on 2D Darcy Flow}} \\
\midrule
\textbf{AMFNO (baseline)} 
& 0.01756 & -- & -- \\
\quad + KAN (no GML) 
& \Sam{0.01327} & -- & -- \\
\quad + KAN + GML (ours) 
& \cellcolor[HTML]{BBFFBB}\textbf{0.01260} & -- & -- \\

\midrule
\multicolumn{4}{l}{\textbf{(b) Global Token Conditioning on 2D Darcyflow}} \\
\midrule
MLP 
& 0.01639 & -- & -- \\
Data (center) 
& \Sam{0.01616} & -- & -- \\
Data (mean) 
& 0.01693 & -- & -- \\
\textbf{KAN (Ours)}
& \cellcolor[HTML]{BBFFBB}\textbf{\textbf{0.0126}} & -- & -- \\

\midrule
\multicolumn{4}{l}{\textbf{(c) Super Resolution on 2D Shallow Water}} \\
\midrule
FNO 
& 0.01448 & 0.14369 & 0.17709 \\
Transformer 
& 0.00586 & \Sam{0.08827} & 0.09834 \\
Local FNO 
& \Sam{0.00438} & 3527.224 & \Sam{0.04693} \\
\textbf{SpectraKAN (Ours)} 
& \cellcolor[HTML]{BBFFBB}\textbf{0.00240} & \cellcolor[HTML]{BBFFBB}\textbf{\textbf{0.00458}} & \cellcolor[HTML]{BBFFBB}\textbf{\textbf{0.0107}} \\

\bottomrule
\end{tabular}
}
\end{table}
\textbf{Ablation study.}
Table~\ref{tab:ablation_amfno} investigates the contribution of our key design choices and performance in resolution invariance, including
(i) overall architecture,
(ii) token construction module, and
(iii) robustness under resolution changes. 

\textbf{(a) Architecture ablation.}

 Starting from the AMFNO baseline, introducing the KAN-based global conditioning reduces the error from 0.01756 to 0.01327 (24.4\% relative improvement). Replacing the non-GML mixing with GML mixing further improves performance to 0.0126 (28.2\% improvement over the baseline). These ablations indicate a clear separation of roles: KAN tokens provide global, non-stationary modulation, while GML mixing enhances long-range interaction across the field.

\textbf{(b) Global Token Conditioning.} We next isolate the effect of Global Token Conditioning by keeping all other components fixed. MLP tokens and simple statistic-based tokens (center/mean) achieve similar errors, indicating that these choices provide limited inductive bias for capturing the global state of the physical field. In contrast, KAN-based tokenization yields a substantial accuracy gain, reducing the error to 0.0126. This gap suggests that the functional structure of KAN tokens provides a more effective global representation—offering smoother, Lipschitz-regularized modulation—and thereby improves the operator ability to model global context and field-dependent dynamics .

\textbf{(c) Resolution Invarience.} We evaluate cross-resolution generalization by training at $64^2$ and evaluating both super-resolution at $128^2$ and down-sampling at $32^2$.
While FNO and Transovler are competitive at the training resolution, their performance deteriorates markedly under resolution shifts; for instance, FNO’s error increases by nearly an order of magnitude from $0.01448$ to $0.14369$ at $128^2$.
LocalFNO performs well at $64^2$ but becomes numerically unstable when upscaling, exploding to an error of $3527.224$ at $128^2$.
In contrast, our method achieves the lowest error at $64^2$ and remains robust to both scaling up and down, sustaining strong accuracy at $128^2$ ($0.00458$) and $32^2$ ($0.0107$).

\section{Conclusion}
We presented SpectraKAN, a neural operator that augments Fourier-based models with an explicit input-conditioned integral operator, changing the effective operator class from static spectral convolution to a globally conditioned, non-stationary operator.  We showed that the attention-based modulation admits a quadrature interpretation and converges to a resolution-independent continuous operator, while the KAN pathway provides a Lipschitz-controlled mechanism for stable global conditioning. Empirically, SpectraKAN achieves state-of-the-art performance across diverse PDE benchmarks, with particularly strong gains under spatio-temporal complexity and resolution shifts. These results suggest that conditioning spectral operators through global integral modulation is a principled direction for advancing neural operator design beyond static Fourier kernels.

\section*{Acknowledgments}
CWC is funded by the Swiss National Science Foundation (SNSF) under grant number 20HW-1 220785.   
CBS acknowledges support from the Philip Leverhulme Prize, the Royal Society Wolfson Fellowship, the EPSRC advanced career fellowship EP/V029428/1, EPSRC grants EP/S026045/1 and EP/T003553/1, EP/N014588/1, EP/T017961/1, the Wellcome Innovator Awards 215733/Z/19/Z and 221633/Z/20/Z, the European Union Horizon 2020 research and innovation
programme under the Marie Skodowska-Curie grant agreement No. 777826 NoMADS, the Cantab Capital Institute for the Mathematics of Information and the Alan Turing Institute. 
AIAR gratefully acknowledges the support of the Yau Mathematical Sciences Center, Tsinghua University. This work is also supported by the Tsinghua University Dushi Program.

\section*{Impact Statement}
This paper presents work whose goal is to advance the field of Machine Learning. There are many potential societal consequences of our work, none which we feel must be specifically highlighted here.

\nocite{langley00}

\bibliographystyle{icml2025}
\bibliography{main}

\begin{thebibliography}{38}
\providecommand{\natexlab}[1]{#1}
\providecommand{\url}[1]{\texttt{#1}}
\expandafter\ifx\csname urlstyle\endcsname\relax
  \providecommand{\doi}[1]{doi: #1}\else
  \providecommand{\doi}{doi: \begingroup \urlstyle{rm}\Url}\fi

\bibitem[Abueidda et~al.(2025)Abueidda, Pantidis, and Mobasher]{abueidda2025deepokan}
Abueidda, D.~W., Pantidis, P., and Mobasher, M.~E.
\newblock Deepokan: Deep operator network based on kolmogorov arnold networks for mechanics problems.
\newblock \emph{Computer Methods in Applied Mechanics and Engineering}, 436:\penalty0 117699, 2025.

\bibitem[Bryutkin et~al.(2024)Bryutkin, Huang, Deng, Yang, Sch{\"o}nlieb, and Aviles-Rivero]{bryutkin2024hamlet}
Bryutkin, A., Huang, J., Deng, Z., Yang, G., Sch{\"o}nlieb, C.-B., and Aviles-Rivero, A.
\newblock Hamlet: Graph transformer neural operator for partial differential equations.
\newblock \emph{arXiv preprint arXiv:2402.03541}, 2024.

\bibitem[Cao et~al.(2024)Cao, Goswami, and Karniadakis]{cao2024laplace}
Cao, Q., Goswami, S., and Karniadakis, G.~E.
\newblock Laplace neural operator for solving differential equations.
\newblock \emph{Nature Machine Intelligence}, 6\penalty0 (6):\penalty0 631--640, 2024.

\bibitem[Cao(2021)]{cao2021choose}
Cao, S.
\newblock Choose a transformer: Fourier or galerkin.
\newblock \emph{Advances in neural information processing systems}, 34:\penalty0 24924--24940, 2021.

\bibitem[Cheng et~al.(2025)Cheng, Huang, Zhang, Yang, Sch{\"o}nlieb, and Aviles-Rivero]{cheng2025mamba}
Cheng, C.-W., Huang, J., Zhang, Y., Yang, G., Sch{\"o}nlieb, C.-B., and Aviles-Rivero, A.~I.
\newblock Mamba neural operator: Who wins? transformers vs. state-space models for pdes.
\newblock \emph{Journal of Computational Physics}, pp.\  114567, 2025.

\bibitem[Dang et~al.()Dang, Nguyen, Nguyen, and Hy]{dangadaptfno}
Dang, H.~V., Nguyen, B.~D., Nguyen, P.~C., and Hy, T.-S.
\newblock Adaptfno: Adaptive fourier neural operator with dynamic spectral modes and multiscale learning for climate modeling.

\bibitem[Ding et~al.(2025)Ding, Zhang, Shi, Li, Gu, Xu, Xie, Zhao, Shi, and Liu]{ding2025physics}
Ding, S., Zhang, Z., Shi, G., Li, X., Gu, X., Xu, Y., Xie, H., Zhao, H., Shi, Y., and Liu, T.
\newblock Physics-informed neural operator learning for nonlinear grad-shafranov equation.
\newblock \emph{arXiv preprint arXiv:2511.19114}, 2025.

\bibitem[Guo \& Li(2024)Guo and Li]{guo2024mgfno}
Guo, Z.-H. and Li, H.-B.
\newblock Mgfno: Multi-grid architecture fourier neural operator for parametric partial differential equations.
\newblock \emph{arXiv preprint arXiv:2407.08615}, 2024.

\bibitem[Gupta et~al.(2021)Gupta, Xiao, and Bogdan]{gupta2021multiwavelet}
Gupta, G., Xiao, X., and Bogdan, P.
\newblock Multiwavelet-based operator learning for differential equations.
\newblock \emph{Advances in neural information processing systems}, 34:\penalty0 24048--24062, 2021.

\bibitem[Hao et~al.(2023)Hao, Wang, Su, Ying, Dong, Liu, Cheng, Song, and Zhu]{hao2023gnot}
Hao, Z., Wang, Z., Su, H., Ying, C., Dong, Y., Liu, S., Cheng, Z., Song, J., and Zhu, J.
\newblock Gnot: A general neural operator transformer for operator learning.
\newblock In \emph{International Conference on Machine Learning}, pp.\  12556--12569. PMLR, 2023.

\bibitem[Hu et~al.()Hu, Cao, Kawaguchi, and Karniadakis]{hu5149007deepomamba}
Hu, Z., Cao, Q., Kawaguchi, K., and Karniadakis, G.~E.
\newblock Deepomamba: State-space model for spatio-temporal pde neural operator learning.
\newblock \emph{Available at SSRN 5149007}.

\bibitem[Kissas et~al.(2022)Kissas, Seidman, Guilhoto, Preciado, Pappas, and Perdikaris]{kissas2022learning}
Kissas, G., Seidman, J.~H., Guilhoto, L.~F., Preciado, V.~M., Pappas, G.~J., and Perdikaris, P.
\newblock Learning operators with coupled attention.
\newblock \emph{Journal of Machine Learning Research}, 23\penalty0 (215):\penalty0 1--63, 2022.

\bibitem[Kossaifi et~al.(2023)Kossaifi, Kovachki, Azizzadenesheli, and Anandkumar]{kossaifi2023multi}
Kossaifi, J., Kovachki, N., Azizzadenesheli, K., and Anandkumar, A.
\newblock Multi-grid tensorized fourier neural operator for high-resolution pdes.
\newblock \emph{arXiv preprint arXiv:2310.00120}, 2023.

\bibitem[Kovachki et~al.(2023)Kovachki, Li, Liu, Azizzadenesheli, Bhattacharya, Stuart, and Anandkumar]{kovachki2023neural}
Kovachki, N., Li, Z., Liu, B., Azizzadenesheli, K., Bhattacharya, K., Stuart, A., and Anandkumar, A.
\newblock Neural operator: Learning maps between function spaces with applications to pdes.
\newblock \emph{Journal of Machine Learning Research}, 24\penalty0 (89):\penalty0 1--97, 2023.

\bibitem[Langley(2000)]{langley00}
Langley, P.
\newblock Crafting papers on machine learning.
\newblock In Langley, P. (ed.), \emph{Proceedings of the 17th International Conference on Machine Learning (ICML 2000)}, pp.\  1207--1216, Stanford, CA, 2000. Morgan Kaufmann.

\bibitem[Lee et~al.(2025)Lee, Liu, Yu, Wang, Jeong, Niu, and Zhang]{lee2025kano}
Lee, J., Liu, Z., Yu, X., Wang, Y., Jeong, H., Niu, M.~Y., and Zhang, Z.
\newblock Kano: Kolmogorov-arnold neural operator.
\newblock \emph{arXiv preprint arXiv:2509.16825}, 2025.

\bibitem[Li et~al.(2024{\natexlab{a}})Li, Yu, Xing, Kirby, Narayan, and Zhe]{li2024multi}
Li, S., Yu, X., Xing, W., Kirby, R., Narayan, A., and Zhe, S.
\newblock Multi-resolution active learning of fourier neural operators.
\newblock In \emph{International Conference on Artificial Intelligence and Statistics}, pp.\  2440--2448. PMLR, 2024{\natexlab{a}}.

\bibitem[Li et~al.(2020)Li, Kovachki, Azizzadenesheli, Liu, Bhattacharya, Stuart, and Anandkumar]{li2020fourier}
Li, Z., Kovachki, N., Azizzadenesheli, K., Liu, B., Bhattacharya, K., Stuart, A., and Anandkumar, A.
\newblock Fourier neural operator for parametric partial differential equations.
\newblock \emph{arXiv preprint arXiv:2010.08895}, 2020.

\bibitem[Li et~al.(2022)Li, Meidani, and Farimani]{li2022transformer}
Li, Z., Meidani, K., and Farimani, A.~B.
\newblock Transformer for partial differential equations' operator learning.
\newblock \emph{arXiv preprint arXiv:2205.13671}, 2022.

\bibitem[Li et~al.(2023)Li, Peng, Yuan, and Wang]{li2023long}
Li, Z., Peng, W., Yuan, Z., and Wang, J.
\newblock Long-term predictions of turbulence by implicit u-net enhanced fourier neural operator.
\newblock \emph{Physics of Fluids}, 35\penalty0 (7), 2023.

\bibitem[Li et~al.(2024{\natexlab{b}})Li, Zheng, Kovachki, Jin, Chen, Liu, Azizzadenesheli, and Anandkumar]{li2024physics}
Li, Z., Zheng, H., Kovachki, N., Jin, D., Chen, H., Liu, B., Azizzadenesheli, K., and Anandkumar, A.
\newblock Physics-informed neural operator for learning partial differential equations.
\newblock \emph{ACM/IMS Journal of Data Science}, 1\penalty0 (3):\penalty0 1--27, 2024{\natexlab{b}}.

\bibitem[Liu et~al.(2024)Liu, Wang, Vaidya, Ruehle, Halverson, Solja{\v{c}}i{\'c}, Hou, and Tegmark]{liu2024kan}
Liu, Z., Wang, Y., Vaidya, S., Ruehle, F., Halverson, J., Solja{\v{c}}i{\'c}, M., Hou, T.~Y., and Tegmark, M.
\newblock Kan: Kolmogorov-arnold networks.
\newblock \emph{arXiv preprint arXiv:2404.19756}, 2024.

\bibitem[Liu-Schiaffini et~al.(2024)Liu-Schiaffini, Berner, Bonev, Kurth, Azizzadenesheli, and Anandkumar]{liu2024neural}
Liu-Schiaffini, M., Berner, J., Bonev, B., Kurth, T., Azizzadenesheli, K., and Anandkumar, A.
\newblock Neural operators with localized integral and differential kernels.
\newblock \emph{arXiv preprint arXiv:2402.16845}, 2024.

\bibitem[Lu et~al.(2019)Lu, Jin, and Karniadakis]{lu2019deeponet}
Lu, L., Jin, P., and Karniadakis, G.~E.
\newblock Deeponet: Learning nonlinear operators for identifying differential equations based on the universal approximation theorem of operators.
\newblock \emph{arXiv preprint arXiv:1910.03193}, 2019.

\bibitem[Rahman et~al.(2022)Rahman, Ross, and Azizzadenesheli]{rahman2022u}
Rahman, M.~A., Ross, Z.~E., and Azizzadenesheli, K.
\newblock U-no: U-shaped neural operators.
\newblock \emph{arXiv preprint arXiv:2204.11127}, 2022.

\bibitem[Ronneberger et~al.(2015)Ronneberger, Fischer, and Brox]{ronneberger2015u}
Ronneberger, O., Fischer, P., and Brox, T.
\newblock U-net: Convolutional networks for biomedical image segmentation.
\newblock In \emph{International Conference on Medical image computing and computer-assisted intervention}, pp.\  234--241. Springer, 2015.

\bibitem[Roub{\'\i}{\v{c}}ek(2005)]{roubivcek2005nonlinear}
Roub{\'\i}{\v{c}}ek, T.
\newblock \emph{Nonlinear partial differential equations with applications}.
\newblock Springer, 2005.

\bibitem[Roy et~al.(2025)Roy, Bahmani, Kevrekidis, and Shields]{roy2025physics}
Roy, S., Bahmani, B., Kevrekidis, I.~G., and Shields, M.~D.
\newblock A physics-informed multi-resolution neural operator.
\newblock \emph{arXiv preprint arXiv:2510.23810}, 2025.

\bibitem[Takamoto et~al.(2022)Takamoto, Praditia, Leiteritz, MacKinlay, Alesiani, Pfl{\"u}ger, and Niepert]{takamoto2022pdebench}
Takamoto, M., Praditia, T., Leiteritz, R., MacKinlay, D., Alesiani, F., Pfl{\"u}ger, D., and Niepert, M.
\newblock Pdebench: An extensive benchmark for scientific machine learning.
\newblock \emph{Advances in Neural Information Processing Systems}, 35:\penalty0 1596--1611, 2022.

\bibitem[Tran et~al.(2021)Tran, Mathews, Xie, and Ong]{tran2021factorized}
Tran, A., Mathews, A., Xie, L., and Ong, C.~S.
\newblock Factorized fourier neural operators.
\newblock \emph{arXiv preprint arXiv:2111.13802}, 2021.

\bibitem[Vaswani et~al.(2017)Vaswani, Shazeer, Parmar, Uszkoreit, Jones, Gomez, Kaiser, and Polosukhin]{vaswani2017attention}
Vaswani, A., Shazeer, N., Parmar, N., Uszkoreit, J., Jones, L., Gomez, A.~N., Kaiser, {\L}., and Polosukhin, I.
\newblock Attention is all you need.
\newblock \emph{Advances in neural information processing systems}, 30, 2017.

\bibitem[Wang et~al.(2024)Wang, Seidman, Sankaran, Wang, Pappas, and Perdikaris]{wang2024cvit}
Wang, S., Seidman, J.~H., Sankaran, S., Wang, H., Pappas, G.~J., and Perdikaris, P.
\newblock Cvit: Continuous vision transformer for operator learning.
\newblock \emph{arXiv preprint arXiv:2405.13998}, 2024.

\bibitem[Wang et~al.(2025)Wang, Yang, Yuan, Li, Peng, and Wang]{wang2025hybrid}
Wang, Y., Yang, H., Yuan, Z., Li, Z., Peng, W., and Wang, J.
\newblock A hybrid u-net and fourier neural operator framework for the fast prediction of turbulent flows with mixed periodic and non-periodic boundary conditions.
\newblock \emph{arXiv preprint arXiv:2504.13126}, 2025.

\bibitem[Wen et~al.(2022)Wen, Li, Azizzadenesheli, Anandkumar, and Benson]{wen2022u}
Wen, G., Li, Z., Azizzadenesheli, K., Anandkumar, A., and Benson, S.~M.
\newblock U-fno—an enhanced fourier neural operator-based deep-learning model for multiphase flow.
\newblock \emph{Advances in Water Resources}, 163:\penalty0 104180, 2022.

\bibitem[Wu et~al.(2024)Wu, Luo, Wang, Wang, and Long]{wu2024transolver}
Wu, H., Luo, H., Wang, H., Wang, J., and Long, M.
\newblock Transolver: A fast transformer solver for pdes on general geometries.
\newblock \emph{arXiv preprint arXiv:2402.02366}, 2024.

\bibitem[Yeo et~al.(2024)Yeo, Nguyen, Le, and Mishra]{yeo2024kan}
Yeo, S., Nguyen, P.~A., Le, A.~N., and Mishra, S.
\newblock Kan-pdes: A novel approach to solving partial differential equations using kolmogorov-arnold networks—enhanced accuracy and efficiency.
\newblock In \emph{International Conference on Electrical and Electronics Engineering}, pp.\  43--62. Springer, 2024.

\bibitem[You et~al.(2024)You, Xu, and Cai]{you2024mscalefno}
You, Z., Xu, Z., and Cai, W.
\newblock Mscalefno: Multi-scale fourier neural operator learning for oscillatory function spaces.
\newblock \emph{arXiv preprint arXiv:2412.20183}, 2024.

\bibitem[Zhang et~al.(2025)Zhang, Wang, Zhang, Shen, and Zhang]{zhang2025physics}
Zhang, Z., Wang, Q., Zhang, Y., Shen, T., and Zhang, W.
\newblock Physics-informed neural networks with hybrid kolmogorov-arnold network and augmented lagrangian function for solving partial differential equations.
\newblock \emph{Scientific Reports}, 15\penalty0 (1):\penalty0 10523, 2025.

\end{thebibliography}

\newpage
\appendix
\onecolumn

\section*{Appendix Roadmap}
\label{sec:appendix-roadmap}

This appendix is organized to support the main paper in a bottom-up manner:
we first provide complete theoretical proofs for the claims in \S3.5, then
we report additional experimental results and qualitative visualizations.

\vspace{0.25em}
\noindent\textbf{A.\ Theoretical understanding (Proofs and operator view).}
\begin{itemize}
  \item \textbf{A.1 Proof of Lemma 3.1 (Lemma A.1).}
  We prove the \emph{edge Lipschitz bound} for spline-based KAN edge functions,
  establishing explicit control of $\operatorname{Lip}(\phi)$ via knot spacing and spline coefficients.
  \item \textbf{A.2 Proof of Theorem 3.2 (Theorem A.2).}
  We prove that the \emph{global modulation map} induced by the KAN token is Lipschitz,
  and show how the constant depends on the KAN pathway only through $L_{\mathrm{KAN}}$,
  making KAN a controllable global smoothness controller.
  \item \textbf{A.3 Proof of Proposition 3.3 (Proposition A.3).}
  We show that the discrete single-query attention used in GML is a consistent quadrature
  approximation of a continuous normalized integral operator, with $\|A_h-A\|=O(h)$ under mesh refinement.
  \item \textbf{A.4 Neural-operator interpretation (Theorem A.4).}
  We formalize the attention kernel $\kappa_u(x,y)$ and show that the GML module defines a
  (nonlinear) integral-kernel neural-operator layer; discretization yields standard softmax attention.
\end{itemize}

\vspace{0.25em}
\noindent\textbf{B.\ Additional Experiment results (More metrics and visual evidence).}
\begin{itemize}
  \item \ref{dataset} \textbf{Dataset Description.}
  We provide additional details of the dataset.
  \item \ref{exp} \textbf{Additional PDEBench results.}
  We report extended metrics beyond the main RMSE table, including normalized RMSE,
  max RMS error, conserved-variable RMSE, boundary RMSE, and Fourier-space RMSE by frequency bands.
  \item \ref{vis} \textbf{Additional visualization results.}
  We provide supplementary qualitative comparisons (multiple time snapshots / panels)
  to highlight stability and fine-scale fidelity across challenging spatio-temporal dynamics.
\end{itemize}

\section{Theoretical understanding}
\label{theosupp}
In this section, we first provide the proof of Lemma \ref{lemm}, then we provide the proof of Theorem \ref{theorem} and Proposition \ref{prof}.  Finally, we
showed that our method is a neural operator.

We first provide the proof of Lemma \ref{lemm}.
\begin{lemma}[Edge Lipschitz bound for spline KAN]
Consider an edge function of the form $\phi(x) = w_b\, b(x) + w_s \sum_{r} c_r\, B_r^{(k)}(x;\Xi)$,
where $b \in C^1$, and $B_r^{(k)}$ are order-$k$ B-splines defined on a knot grid $\Xi = \{t_j\}$,
with minimum knot spacing $h_{\min} := \min_j (t_{j+1} - t_j) > 0$.
Assume that $\sup_x |b'(x)| \le L_b$. Then $\phi$ is Lipschitz continuous, and its Lipschitz constant satisfies
\begin{equation}
\operatorname{Lip}(\phi)
\le |w_b|\, L_b
+ |w_s|\, \frac{2}{h_{\min}}\, \|c\|_1.
\end{equation}
\end{lemma}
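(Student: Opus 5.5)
The plan is to bound the derivative of $\phi$ pointwise and then pass from a derivative bound to a Lipschitz bound. By linearity,
\[
\phi'(x) = w_b\, b'(x) + w_s \sum_r c_r\, \bigl(B_r^{(k)}\bigr)'(x),
\]
wherever the right-hand side exists. The first term is bounded by $|w_b|\,L_b$ by assumption. The triangle inequality and Hölder's inequality ($\ell^1$ against $\ell^\infty$) then reduce everything to one uniform claim:
\[
\sup_r \sup_x \bigl|\bigl(B_r^{(k)}\bigr)'(x)\bigr| \le \frac{2}{h_{\min}}.
\]
This gives $|\phi'(x)| \le |w_b|\,L_b + |w_s|\,\frac{2}{h_{\min}}\,\|c\|_1$ almost everywhere.

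To prove the uniform claim I will use the standard de Boor derivative recursion for B-splines. Write $p$ for the polynomial degree, so $p = k$ in the KAN convention, where order-$k$ in \eqref{eq:kan-spline} means degree $k$ with $G+k$ basis functions. The recursion reads
\[
\bigl(B_{r,p}\bigr)'(x) = \frac{p}{t_{r+p}-t_r}\, B_{r,p-1}(x) - \frac{p}{t_{r+p+1}-t_{r+1}}\, B_{r+1,p-1}(x).
\]
Two facts then finish the claim.
\begin{itemize}
\item Each denominator spans $p$ consecutive knot intervals, so $t_{r+p}-t_r \ge p\,h_{\min}$. Each prefactor is therefore at most $1/h_{\min}$.
\item Lower-degree B-splines satisfy $0 \le B_{\cdot,p-1} \le 1$, by nonnegativity and the partition of unity.
\end{itemize}
Hence each of the two terms is bounded by $1/h_{\min}$, which gives the factor $2/h_{\min}$. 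For $x$ outside the support of $B_r$, the derivative is simply $0$. At interior knots the one-sided derivatives both satisfy the same bound.

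Finally I convert the derivative bound into a Lipschitz bound. For degree $p \ge 1$, each $B_r$ is continuous and piecewise polynomial, hence absolutely continuous. The base function $b$ is $C^1$. So $\phi$ is absolutely continuous, and
\[
\phi(x)-\phi(y) = \int_y^x \phi'(s)\,ds,
\]
which yields $|\phi(x)-\phi(y)| \le \operatorname{ess\,sup}|\phi'|\,|x-y|$. Alternatively, one can apply the mean value theorem on each knot interval and chain across the knots.

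The main obstacle is the degree convention and the degenerate cases. If "order $k$" meant degree $0$ (piecewise constants), $\phi$ would be discontinuous and the lemma false. I will therefore state explicitly that degree $p \ge 1$ is assumed, as in the KAN parameterization with cubic splines. If the paper's convention is order $=$ degree $+1$, the same recursion applies with $p = k-1$, and $t_{r+p}-t_r \ge p\,h_{\min}$ still holds. The assumption $h_{\min} > 0$ excludes repeated knots, so no $0/0$ terms appear in the recursion.
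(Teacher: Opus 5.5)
Your proposal is correct and follows essentially the same route as the paper's proof. Both use the B-spline derivative recursion, the span bound $t_{r+k}-t_r \ge k\,h_{\min}$ together with $0\le B^{(k-1)}\le 1$ to get $2/h_{\min}$ per basis function, then the triangle inequality and a derivative-to-Lipschitz step. Your version is slightly more careful, since you use the correct second denominator $t_{r+k+1}-t_{r+1}$ (the paper writes one denominator for both terms) and you handle the non-differentiability at knots via absolute continuity rather than a bare appeal to the mean value theorem.
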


\begin{proof}
B-splines satisfy the derivative identity
\[
\frac{d}{dx} B_r^{(k)}(x)
=
\frac{k}{t_{r+k} - t_r}
\left(
B_r^{(k-1)}(x) - B_{r+1}^{(k-1)}(x)
\right).
\]

Since $B_r^{(k-1)} \in [0,1]$ and $t_{r+k} - t_r \ge k h_{\min}$, we get
\[
\left|
\frac{d}{dx} B_r^{(k)}(x)
\right|
\le
\frac{k}{k h_{\min}}
\left(
|B_r^{(k-1)}(x)| + |B_{r+1}^{(k-1)}(x)|
\right)
\le
\frac{2}{h_{\min}}.
\]

Therefore,
\[
|\phi'(x)|
\le
|w_b||b'(x)|
+
|w_s|
\sum_r |c_r|
\left|
\frac{d}{dx} B_r^{(k)}(x)
\right|
\le
|w_b| L_b
+
|w_s| \frac{2}{h_{\min}} \|c\|_1 .
\]

By the mean value theorem, $\mathrm{Lip}(\phi) \le \sup_x |\phi'(x)|$, giving the result.
\end{proof}

Then we provide the proof of Theorem \ref{theorem}.
\begin{theorem}
    Assume the following:

\begin{enumerate}
\item \textbf{(Bounded query/key)}
There exist constants $M_q, M_k$ such that for all admissible $u$,
\[
\| q(\Phi_{\mathrm{KAN}}(P(u))) \|_2 \le M_q,
\qquad
\| k(u) \|_\infty \le M_k .
\]

\item \textbf{(Lipschitz pieces)}
There exist constants $L_P, L_{\mathrm{KAN}}, L_k, L_v$ such that
\[
\| P(u) - P(\tilde u) \|_2 \le L_P \| u - \tilde u \|_\infty,
\]
\[
\| \Phi_{\mathrm{KAN}}(z) - \Phi_{\mathrm{KAN}}(\tilde z) \|_2
\le L_{\mathrm{KAN}} \| z - \tilde z \|_2,
\]
\[
\| k(u) - k(\tilde u) \|_\infty \le L_k \| u - \tilde u \|_\infty,
\qquad
\| v(u) - v(\tilde u) \|_\infty \le L_v \| u - \tilde u \|_\infty .
\]

\item \textbf{(KAN spline regularity)}
Each edge function in the KAN layer is a spline of order $k$
and is $C^1$ with uniformly bounded derivative
(this is satisfied by B-splines with nondegenerate knot spacing; proven below).
\end{enumerate}

Then the global modulation map $m : L^\infty(\mathcal{D}) \to \mathbb{R}^{d_t}$
is Lipschitz:
\[
\| m(u) - m(\tilde u) \|_2
\le
L_m \| u - \tilde u \|_\infty,
\]
where $L_m$ is an explicit constant depending on
\[
L_P, \; L_{\mathrm{KAN}}, \; L_k, \; L_v, \; M_q, \; M_k, \; |\mathcal{D}|,
\]
and the exponential ``temperature'' scaling $1 / \sqrt{d_a}$.
In particular, decreasing $L_{\mathrm{KAN}}$ (via spline derivative control)
directly decreases $L_m$.
\end{theorem}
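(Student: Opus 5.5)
We first fix what $m$ is, since the statement leaves it implicit: we take
\[
m(u)=\int_{\mathcal D}\kappa_u(y)\,v(u)(y)\,dy,\qquad
\kappa_u(y)=\frac{\exp(s_u(y))}{\int_{\mathcal D}\exp(s_u(\xi))\,d\xi},\qquad
s_u(y)=\frac{\langle q_u,k(u)(y)\rangle}{\sqrt{d_a}},
\]
with $q_u=q(\Phi_{\mathrm{KAN}}(P(u)))$. This is the single-query attention output in \eqref{eq:gml_integral}, which is spatially constant. We also assume, as is implicit, that $q$ is Lipschitz with constant $L_q$ (a linear map, so $L_q=\|W_q\|$) and that $\|v(u)\|_\infty\le M_v$ on the admissible set. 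The argument is a chain of Lipschitz estimates through the composition $u\mapsto z=P(u)\mapsto t=\Phi_{\mathrm{KAN}}(z)\mapsto q_u$, followed by a stability estimate for the normalized exponential kernel.

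\textbf{Step 1 (query).} Composition gives
\[
\|q_u-q_{\tilde u}\|_2\le L_qL_{\mathrm{KAN}}L_P\,\|u-\tilde u\|_\infty .
\]
Assumption 3 justifies $L_{\mathrm{KAN}}<\infty$. By Lemma~\ref{lemm} each edge is Lipschitz with constant $\ell_{j,i}\le |w_b|L_b+|w_s|\tfrac{2}{h_{\min}}\|c\|_1$. A layer of the form \eqref{eq:kan-layer} therefore has $\ell_2\to\ell_2$ Lipschitz constant at most $\|(\ell_{j,i})\|_F$ (by Cauchy--Schwarz). Multiplying these layer bounds gives $L_{\mathrm{KAN}}$.

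\textbf{Step 2 (scores).} Writing $s_u-s_{\tilde u}=\langle q_u-q_{\tilde u},k(u)\rangle+\langle q_{\tilde u},k(u)-k(\tilde u)\rangle$ and using assumptions 1--2, we get
\[
\|s_u-s_{\tilde u}\|_\infty\le \frac{1}{\sqrt{d_a}}\bigl(M_kL_qL_{\mathrm{KAN}}L_P+M_qL_k\bigr)\|u-\tilde u\|_\infty=:L_s\|u-\tilde u\|_\infty .
\]
If $k(u)$ is vector-valued, a factor $\sqrt{d_a}$ appears from passing between the $\infty$- and $2$-norms; we absorb it into $M_k$ and $L_k$.

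\textbf{Step 3 (softmax kernel), the main obstacle.} This is the only nonlinear normalization and must be handled in $L^1(\mathcal D)$. We first note $|s_u|\le M_qM_k/\sqrt{d_a}=:S$. Hence $\kappa_u\le e^{2S}/|\mathcal D|$ and $\|\kappa_u\|_{L^1}=1$. Next, set $g(\tau)=\kappa_{s_{\tilde u}+\tau(s_u-s_{\tilde u})}$. The derivative of the normalized exponential in direction $\delta$ is $\kappa(\delta-\int\kappa\delta)$, so
\[
\|\kappa_u-\kappa_{\tilde u}\|_{L^1}\le 2\|s_u-s_{\tilde u}\|_\infty .
\]
This bound does not depend on $|\mathcal D|$. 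As a fallback, we can instead bound the numerator and denominator separately with $|e^a-e^b|\le e^{S}|a-b|$. That route yields a constant containing $e^{2S}$ and $|\mathcal D|$, consistent with the stated dependence.

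\textbf{Step 4 (assemble).} We split
\[
m(u)-m(\tilde u)=\int(\kappa_u-\kappa_{\tilde u})\,v(u)+\int\kappa_{\tilde u}\,\bigl(v(u)-v(\tilde u)\bigr),
\]
which gives
\[
\|m(u)-m(\tilde u)\|_2\le \bigl(2M_vL_s+L_v\bigr)\|u-\tilde u\|_\infty .
\]
The constant $L_m$ is affine and increasing in $L_{\mathrm{KAN}}$, so decreasing $L_{\mathrm{KAN}}$, for example through $\|c\|_1$ or $h_{\min}$ via Lemma~\ref{lemm}, decreases $L_m$.
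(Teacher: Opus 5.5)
Your proof is correct, but its key step takes a different route from the paper. Your Steps 1--2 match the paper's score estimate: same constant $L_s$, and the paper likewise uses $\mathrm{Lip}(q)$ and $M_v$ without listing them as assumptions. The paper, however, does not use the normalization. It writes $m(u)=N_u/Z_u$ and controls $\|e^{s_u}-e^{s_{\tilde u}}\|_\infty\le e^{M_s}\|s_u-s_{\tilde u}\|_\infty$ with $M_s=M_qM_k/\sqrt{d_a}$. It then bounds $\|N_u-N_{\tilde u}\|_2$ and $|Z_u-Z_{\tilde u}|$ separately, and closes with the quotient identity and $Z_u\ge|\mathcal D|e^{-M_s}$. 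That is exactly your fallback.

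You instead treat $\kappa_u$ as a probability density and integrate the directional derivative $\kappa(\delta-\int\kappa\delta)$ of the softmax along the segment of scores. This gives $\|\kappa_u-\kappa_{\tilde u}\|_{L^1}\le 2\|s_u-s_{\tilde u}\|_\infty$, hence $L_m=2M_vL_s+L_v$. The result is sharper. The paper's bookkeeping yields $e^{2M_s}(M_vL_s+L_v)+e^{4M_s}M_vL_s$, so the temperature $1/\sqrt{d_a}$ enters exponentially. In your constant it enters only linearly through $L_s$, and the uniform bound $|s_u|\le S$ is needed only to make the normalization well defined. Note that $|\mathcal D|$ cancels in the paper's constant too, so your remark that the fallback retains a $|\mathcal D|$ dependence is slightly off, though harmless.

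What each route buys: the paper's is more elementary, with no differentiation of the normalization map, and it parallels the numerator/denominator argument used for Proposition~\ref{prof}. Yours shows that the affine, increasing dependence on $L_{\mathrm{KAN}}$ carries no exponential factors from crude quotient bounds.

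When you write Step 3 out, do it pointwise: $\kappa_1(y)-\kappa_0(y)=\int_0^1\kappa_\tau(y)\bigl(\delta(y)-\int_{\mathcal D}\kappa_\tau\delta\bigr)\,d\tau$. Then integrate over $y$ using $|\delta-\int_{\mathcal D}\kappa_\tau\delta|\le 2\|\delta\|_\infty$. This only requires differentiating the scalar $Z_\tau$ under the integral sign, which dominated convergence justifies on the bounded domain.
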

\begin{proof}
    Lipschitz bound with KAN appearing only as a global smoothness controller.

Let
\[
s_u(y)
=
\frac{1}{\sqrt{d_a}}
\langle q_u, k_u(y) \rangle,
\qquad
q_u := q\!\left(\Phi_{\mathrm{KAN}}(P(u))\right),
\qquad
k_u(\cdot) := k(u)(\cdot),
\]
and define
\[
w_u(y) = e^{s_u(y)}, 
\qquad
Z_u = \int_{\mathcal D} w_u(y)\,dy,
\qquad
\pi_u(y) = \frac{w_u(y)}{Z_u}.
\]
Also define the numerator
\[
N_u
=
\int_{\mathcal D} w_u(y)\, v_u(y)\, dy,
\qquad
v_u(\cdot) := v(u)(\cdot).
\]
Then the global modulation map is
\[
m(u) = \frac{N_u}{Z_u}.
\]

\paragraph{Step 1: Uniform bounds on $s_u$, $w_u$, and $Z_u$.}
From Assumption~1,
\[
|s_u(y)|
\le
\frac{1}{\sqrt{d_a}}
\|q_u\|_2 \, \|k_u(y)\|_2
\le
\frac{M_q M_k}{\sqrt{d_a}}
=: M_s.
\]
Hence
\[
w_u(y) \in [e^{-M_s}, e^{M_s}],
\qquad
Z_u
=
\int_{\mathcal D} w_u(y)\,dy
\ge
|\mathcal D| e^{-M_s}
=: Z_{\min} > 0 .
\]

\paragraph{Step 2: Bounding $\|w_u - w_{\tilde u}\|_{L^\infty}$.}
On the interval $[-M_s, M_s]$, the exponential is Lipschitz with constant $e^{M_s}$:
\[
|e^a - e^b| \le e^{M_s} |a-b|,
\qquad
\forall a,b \in [-M_s, M_s].
\]
Therefore,
\[
\| w_u - w_{\tilde u} \|_{L^\infty}
\le
e^{M_s} \| s_u - s_{\tilde u} \|_{L^\infty}.
\]

\paragraph{Step 3: Bounding $\|s_u - s_{\tilde u}\|_{L^\infty}$ and exposing $L_{\mathrm{KAN}}$.}
For any $y$,
\[
s_u(y) - s_{\tilde u}(y)
=
\frac{1}{\sqrt{d_a}}
\Big(
\langle q_u - q_{\tilde u}, k_u(y) \rangle
+
\langle q_{\tilde u}, k_u(y) - k_{\tilde u}(y) \rangle
\Big).
\]
Thus,
\[
|s_u(y) - s_{\tilde u}(y)|
\le
\frac{1}{\sqrt{d_a}}
\Big(
\|q_u - q_{\tilde u}\|_2 \|k_u(y)\|_2
+
\|q_{\tilde u}\|_2 \|k_u(y) - k_{\tilde u}(y)\|_2
\Big).
\]
Taking the supremum over $y$ and using boundedness,
\[
\| s_u - s_{\tilde u} \|_{L^\infty}
\le
\frac{1}{\sqrt{d_a}}
\left(
M_k \|q_u - q_{\tilde u}\|_2
+
M_q \|k_u - k_{\tilde u}\|_{L^\infty}
\right).
\]

Now,
\[
\|q_u - q_{\tilde u}\|_2
\le
\mathrm{Lip}(q)
\|\Phi_{\mathrm{KAN}}(P(u)) - \Phi_{\mathrm{KAN}}(P(\tilde u))\|_2
\le
\mathrm{Lip}(q)\, L_{\mathrm{KAN}}\, L_P
\|u - \tilde u\|_{L^\infty},
\]
and
\[
\|k_u - k_{\tilde u}\|_{L^\infty}
\le
L_k \|u - \tilde u\|_{L^\infty}.
\]
Hence,
\[
\| s_u - s_{\tilde u} \|_{L^\infty}
\le
\frac{1}{\sqrt{d_a}}
\left(
M_k \mathrm{Lip}(q) L_{\mathrm{KAN}} L_P
+
M_q L_k
\right)
\|u - \tilde u\|_{L^\infty}.
\]

Combining with Step~2,
\[
\| w_u - w_{\tilde u} \|_{L^\infty}
\le
\underbrace{
\frac{e^{M_s}}{\sqrt{d_a}}
\left(
M_k \mathrm{Lip}(q) L_{\mathrm{KAN}} L_P
+
M_q L_k
\right)
}_{=: C_w}
\|u - \tilde u\|_{L^\infty}.
\]

\paragraph{Step 4: Bounding numerator and denominator differences.}
For the denominator,
\[
|Z_u - Z_{\tilde u}|
=
\left|
\int_{\mathcal D} (w_u - w_{\tilde u})\,dy
\right|
\le
|\mathcal D|\, C_w \|u - \tilde u\|_{L^\infty}.
\]

For the numerator,
\[
N_u - N_{\tilde u}
=
\int_{\mathcal D}
\Big(
(w_u - w_{\tilde u}) v_u
+
w_{\tilde u} (v_u - v_{\tilde u})
\Big) dy.
\]
Thus,
\[
\|N_u - N_{\tilde u}\|_2
\le
|\mathcal D|
\Big(
M_v C_w
+
e^{M_s} L_v
\Big)
\|u - \tilde u\|_{L^\infty},
\]
where $M_v := \sup_u \|v_u\|_{L^\infty}$.
Moreover,
\[
\|N_{\tilde u}\|_2
\le
|\mathcal D| e^{M_s} M_v.
\]

\paragraph{Step 5: Difference of ratios.}
Using
\[
\frac{N_u}{Z_u} - \frac{N_{\tilde u}}{Z_{\tilde u}}
=
\frac{N_u - N_{\tilde u}}{Z_u}
+
N_{\tilde u}
\left(
\frac{1}{Z_u} - \frac{1}{Z_{\tilde u}}
\right),
\qquad
\left|
\frac{1}{Z_u} - \frac{1}{Z_{\tilde u}}
\right|
=
\frac{|Z_u - Z_{\tilde u}|}{Z_u Z_{\tilde u}},
\]
and noting $Z_u, Z_{\tilde u} \ge Z_{\min}$, we obtain
\[
\| m(u) - m(\tilde u) \|_2
\le
\frac{1}{Z_{\min}} \|N_u - N_{\tilde u}\|_2
+
\frac{1}{Z_{\min}^2}
\|N_{\tilde u}\|_2
|Z_u - Z_{\tilde u}|.
\]
Substituting the bounds above and $Z_{\min} = |\mathcal D| e^{-M_s}$ yields a finite constant $L_m$ such that
\[
\| m(u) - m(\tilde u) \|_2
\le
L_m \|u - \tilde u\|_{L^\infty}.
\]

\end{proof}

\paragraph{Conclusion.}
Crucially, the Lipschitz constant $L_m$ depends on the token network only through
$L_{\mathrm{KAN}}$ (and the query Lipschitz constant). Therefore, regularizing
KAN spline derivatives directly regularizes the global modulation sensitivity.

We first define the Continuous (limit) GML operator: Let $D \subset \mathbb{R}^d$ be a bounded domain and let
$w : D \to \mathbb{R}_{+}$ be a nonnegative weight function.
Define the continuous, normalized GML attention output
$A \in \mathbb{R}^{d_v}$ by
\begin{equation}
\label{eq:continuous-gml}
A
\;=\;
\frac{\displaystyle \int_{D} w(y)\, V\!\left(f(y)\right)\, dy}
     {\displaystyle \int_{D} w(y)\, dy}.
\end{equation}
This expression is a normalized kernel integral, and can be interpreted as
the expectation of $V(f(y))$ under the probability density proportional to $w(y)$. Then we can define the Discrete GML operator: Let $D$ be partitioned into $n^d$ uniform cells of side length $h = 1/n$,
and let $\{y_j\}_{j=1}^{N}$ denote the cell centers, where $N = n^d$.
The discrete GML operator is defined as
\begin{equation}
\label{eq:discrete-gml}
A_h
\;=\;
\frac{\displaystyle \sum_{j=1}^{N} w(y_j)\, V\!\left(f(y_j)\right)}
     {\displaystyle \sum_{j=1}^{N} w(y_j)}.
\end{equation}

\begin{proposition}
Assume:
\begin{enumerate}
    \item $w$ is Lipschitz on $D$, with Lipschitz constant $L_w$, and bounded:
    \[
    0 < w(y) \le M_w .
    \]

    \item $g(y) := w(y)V(f(y))$ is Lipschitz on $D$ (vector-valued), with Lipschitz constant $L_g$ in the Euclidean norm:
    \[
    \| g(y) - g(y') \| \le L_g \| y - y' \|.
    \]

    \item The normalizing constant
    \[
    Z = \int_D w(y)\, dy
    \]
    is positive (always true since $w>0$; we keep it explicit).
\end{enumerate}

Then the discrete attention output $A_h$ converges to the continuous output $A$ as $h \to 0$, and for sufficiently small $h$,
\[
\| A_h - A \| \le \frac{\sqrt{d}\, h}{Z} \big( L_g + \|A\| L_w \big).
\]

In particular,
\[
\| A_h - A \| = O(h)
\]
as the grid is refined.
\end{proposition}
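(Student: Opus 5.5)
The plan is to write both $A$ and $A_h$ as ratios of a numerator and a normalising constant, compare each piece using the midpoint rule, and then control the ratio with an exact algebraic identity rather than a generic quotient bound. Since the cells are uniform with volume $h^d$, multiplying numerator and denominator of \eqref{eq:discrete-gml} by $h^d$ leaves $A_h$ unchanged. I would therefore set
\[
N := \int_D g(y)\,dy,\qquad Z_h := h^d\sum_{j=1}^N w(y_j),\qquad N_h := h^d\sum_{j=1}^N g(y_j),
\]
so that $A = N/Z$ and $A_h = N_h/Z_h$. I would take $D=[0,1]^d$, which is implicit in the construction: $n^d$ cells of side $h=1/n$ tile $D$. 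Hence $|D|=1$ and $\sum_j h^d = 1$.

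\emph{Step 1 (quadrature errors).} Let $C_j$ denote the cell centred at $y_j$. Every $y\in C_j$ satisfies $\|y-y_j\|\le \sqrt{d}\,h/2$, the half-diagonal of the cell. Using Assumption~2,
\[
\|N_h - N\| \le \sum_j \int_{C_j}\|g(y_j)-g(y)\|\,dy \le \frac{\sqrt d\,h}{2}\,L_g .
\]
In exactly the same way, Assumption~1 gives $|Z_h - Z|\le \frac{\sqrt d\,h}{2}L_w$. In particular $Z_h\to Z$ and $N_h\to N$, so convergence $A_h\to A$ already follows, because $Z>0$ (Assumption~3).

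\emph{Step 2 (ratio identity).} Since $N = AZ$, we have
\[
A_h - A = \frac{N_h - A Z_h}{Z_h} = \frac{(N_h - N) - A\,(Z_h - Z)}{Z_h}.
\]
This identity is the key move. It avoids the crude $1/Z_{\min}^2$ estimate used in the proof of Theorem~\ref{theorem}, and it makes the factor $\|A\|$ appear exactly as in the statement. Combining it with Step~1 gives
\[
\|A_h - A\| \le \frac{\sqrt d\,h}{2\,Z_h}\,\bigl(L_g + \|A\|L_w\bigr).
\]

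\emph{Step 3 (lower bound on $Z_h$; the main obstacle).} The remaining task is to replace $Z_h$ by $Z$, and this is where the ``sufficiently small $h$'' hypothesis is used. By Step~1, $Z_h \ge Z - \frac{\sqrt d\,h}{2}L_w$. So once $h\le Z/(\sqrt d\,L_w)$ we have $Z_h\ge Z/2$. The resulting factor $2$ cancels the $1/2$ from the half-diagonal, which yields precisely
\[
\|A_h-A\|\le \frac{\sqrt d\,h}{Z}\bigl(L_g+\|A\|L_w\bigr) = O(h).
\]
The only delicate points are bookkeeping ones. First, one must check that the unnormalised discrete sum in \eqref{eq:discrete-gml} really equals $N_h/Z_h$; this holds only because the cells are uniform. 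Second, if $D$ were a general bounded domain rather than the unit cube, the cell-partition assumption, and possibly a factor $|D|$, would have to be made explicit.
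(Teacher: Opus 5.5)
Your proposal is correct and follows essentially the same route as the paper: midpoint-rule bounds $\|N_h-N\|\le \frac{\sqrt d}{2}hL_g$ and $|Z_h-Z|\le\frac{\sqrt d}{2}hL_w$ on $D=[0,1]^d$, then the ratio identity, then $Z_h\ge Z/2$ once $h\le Z/(\sqrt d\,L_w)$. Your identity is the paper's $\frac{Z(N_h-N)+N(Z-Z_h)}{ZZ_h}$ with $N/Z=A$ substituted, so it is not really a different ``key move''; your direct norm estimate for the vector-valued numerator is, if anything, slightly cleaner than the paper's ``componentwise'' remark, which taken literally would cost an extra factor of $\sqrt{d_v}$.
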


\begin{proof}
\textbf{Step 1: Riemann-sum error bound for Lipschitz functions}

Partition the domain $D = [0,1]^d$ into $N = n^d$ hypercubes $\{C_j\}_{j=1}^N$ of side length
$h = 1/n$. Let $y_j$ denote the center of $C_j$.
For any Lipschitz scalar function $\phi : D \to \mathbb{R}$ with Lipschitz constant $L_\phi$,

\begin{align}
\left|
\int_D \phi(y)\,dy
-
\sum_{j=1}^N \phi(y_j)\,|C_j|
\right|
&\le
\sum_{j=1}^N
\int_{C_j}
\left|
\phi(y) - \phi(y_j)
\right| dy .
\end{align}

Since $\phi$ is Lipschitz,
\[
|\phi(y) - \phi(y_j)| \le L_\phi \|y - y_j\|.
\]
In a cube of side length $h$, the maximum distance from the center satisfies
$\|y - y_j\| \le \frac{\sqrt{d}}{2} h$. Therefore, for all $y \in C_j$,
\[
|\phi(y) - \phi(y_j)| \le L_\phi \frac{\sqrt{d}}{2} h.
\]

Thus,
\begin{align}
\int_{C_j} |\phi(y) - \phi(y_j)| dy
&\le
L_\phi \frac{\sqrt{d}}{2} h \, |C_j|
=
L_\phi \frac{\sqrt{d}}{2} h^{d+1}.
\end{align}

Summing over all $j$ and using $\sum_j |C_j| = |D| = 1$ yields
\begin{equation}
\left|
\int_D \phi(y)\,dy
-
h^d \sum_{j=1}^N \phi(y_j)
\right|
\le
L_\phi \frac{\sqrt{d}}{2} h .
\label{star}
\end{equation}

The same argument applies componentwise to vector-valued Lipschitz functions
$\psi : D \to \mathbb{R}^m$, yielding
\begin{equation}
\left\|
\int_D \psi(y)\,dy
-
h^d \sum_{j=1}^N \psi(y_j)
\right\|
\le
L_\psi \frac{\sqrt{d}}{2} h .
\label{starstar}
\end{equation}

\textbf{Step 2: Apply the bound to the attention numerator and denominator}
Define the continuous quantities
\[
Z = \int_D w(y)\,dy,
\qquad
N = \int_D w(y) V(f(y))\,dy = \int_D g(y)\,dy,
\]
and their Riemann-sum approximations
\[
Z_h = h^d \sum_{j=1}^N w(y_j),
\qquad
N_h = h^d \sum_{j=1}^N g(y_j).
\]

Applying \ref{star} to $w$ and \ref{starstar} to $g$ gives
\begin{align}
|Z - Z_h| &\le L_w \frac{\sqrt{d}}{2} h,
\\
\|N - N_h\| &\le L_g \frac{\sqrt{d}}{2} h.
\end{align}

The discrete attention output is
\[
A_h
=
\frac{\sum_j w(y_j) V(f(y_j))}{\sum_j w(y_j)}
=
\frac{N_h}{Z_h},
\]
since the factors of $h^d$ cancel.
The continuous attention output is $A = N / Z$.

\textbf{Step 3: Bounding the difference of ratios}

We compute
\[
A_h - A
=
\frac{N_h}{Z_h} - \frac{N}{Z}
=
\frac{Z (N_h - N) + N (Z - Z_h)}{Z Z_h}.
\]

Taking norms,
\begin{equation}
\|A_h - A\|
\le
\frac{
Z \|N_h - N\| + \|N\| |Z - Z_h|
}{Z Z_h}.
\label{3}
\end{equation}

Since $w > 0$, we have $Z > 0$.
Assume $h$ is sufficiently small such that
\[
|Z - Z_h| \le \frac{Z}{2},
\]
which holds whenever
\begin{equation}
h \le \frac{Z}{L_w \sqrt{d}}.
\end{equation}

Then $Z_h \ge Z/2$ and
\begin{equation}
Z Z_h \ge \frac{Z^2}{2}.
\label{5}
\end{equation}

Substituting \ref{5} into \ref{3},
\begin{align}
\|A_h - A\|
&\le
\frac{2}{Z}
\left(
\|N_h - N\| + \frac{\|N\|}{Z} |Z - Z_h|
\right).
\end{align}

Since $A = N / Z$, we have $\|N\|/Z = \|A\|$.
Applying bounds (1)–(2) yields
\begin{align}
\|A_h - A\|
&\le
\frac{2}{Z}
\left(
L_g \frac{\sqrt{d}}{2} h
+
\|A\| L_w \frac{\sqrt{d}}{2} h
\right)
\\
&=
\frac{\sqrt{d}\, h}{Z}
\left(
L_g + \|A\| L_w
\right).
\end{align}

\end{proof}

The discrete attention operator converges to its continuous counterpart at rate
\[
\boxed{
\|A_h - A\| = O(h)
}
\]
as $h \to 0$, with constants depending on the Lipschitz bounds of $w$ and $g$
and the magnitude of the continuous attention output.

\begin{theorem}
Define the (single-query) attention kernel
\begin{equation}
\kappa_u(x,y)
=
\frac{
\exp\!\left(
\frac{\langle q(\Phi_{\mathrm{KAN}}(u)),\, k(y)\rangle}{\sqrt{d_a}}
\right)
}{
\int_{D}
\exp\!\left(
\frac{\langle q(\Phi_{\mathrm{KAN}}(u)),\, k(\xi)\rangle}{\sqrt{d_a}}
\right)
\, d\xi
},
\qquad x,y \in D,
\end{equation}
and the associated operator
\begin{equation}
\bigl(A_{\Phi_{\mathrm{KAN}}}(u)\bigr)(x)
=
\int_{D}
\kappa_u(x,y)\, V\bigl(f(u)(y)\bigr)\, dy,
\qquad x \in D .
\end{equation}

\noindent

Then the following statement holds:
(1)$A_{\Phi_{\mathrm{KAN}}}(u)$ is a \emph{nonlinear integral kernel operator} and therefore a special case of a \emph{neural operator layer}, where the kernel is allowed to depend on the entire input function.

(2)Discretizing the integral (e.g.\ via quadrature or a Riemann sum) yields the standard \emph{softmax attention} computation, i.e.\ a single-query cross-attention block.
\end{theorem}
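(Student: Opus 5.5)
The plan is to verify each of the two claims directly from the definitions. The only real work is to fix what ``neural operator layer'' means. Following \cite{kovachki2023neural}, I take it to mean a map of the form
\[
v \mapsto \sigma\Bigl(Wv(x) + \int_D \kappa(x,y,v)\, v(y)\,dy\Bigr),
\]
in the general nonlinear form where the kernel may depend on $v$, here with $W=0$ and $\sigma=\mathrm{id}$ up to the readout $\mathcal{R}$.

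\textbf{Claim (1).} First I will check that the operator is well defined. The normalizer $Z_u=\int_D \exp(\langle q_u,k(\xi)\rangle/\sqrt{d_a})\,d\xi$ is bounded below by $|D|e^{-M_s}>0$ under the bounded query/key assumption, exactly as in Step~1 of the proof of Theorem~\ref{theorem}. Hence $\kappa_u(x,\cdot)$ is a bounded probability density on $D$. If $V\circ f(u)\in L^\infty(D)$ (or $L^1$), the integral converges and defines a function of $x$. In fact this function is constant in $x$, which is consistent with the broadcast design.

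Next I will identify the structure. The kernel $\kappa_u$ depends on $u$ through $q(\Phi_{\mathrm{KAN}}(u))$, and through $k=K\circ f(u)$. The integrand $V(f(u)(y))$ is a pointwise lift of the trunk features. So $A_{\Phi_{\mathrm{KAN}}}(u)(x)=\int_D \kappa(x,y;u)\,\tilde v(y)\,dy$ with $\tilde v = V\circ f(u)$, which matches the nonlinear kernel integral operator template. Nonlinearity comes only from the input dependence of the kernel: for fixed $u$ the map $\tilde v\mapsto\int\kappa_u\tilde v$ is linear. Composition with the trunk $f$ and the pointwise readout $\mathcal{R}$ keeps the result within the neural operator class. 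Finally, Theorem~\ref{theorem} supplies continuity (Lipschitz) of $u\mapsto A_{\Phi_{\mathrm{KAN}}}(u)$ in $L^\infty$, so the layer is a well-posed continuous map between function spaces.

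\textbf{Claim (2).} I will partition $D$ into cells $C_j$ with centers $y_j$ and weights $|C_j|$. Replacing both the numerator and the denominator of $\kappa_u$ by Riemann sums gives
\[
\frac{\sum_j |C_j|\, e^{s_j} V(f(y_j))}{\sum_\ell |C_\ell|\, e^{s_\ell}}, \qquad s_j=\langle q,k(y_j)\rangle/\sqrt{d_a}.
\]
On a uniform grid the $|C_j|=h^d$ factors cancel. What remains is $\sum_j \mathrm{softmax}_j(s)\,V_j$, which is $\mathrm{softmax}(QK^\top/\sqrt{d_a})V$ with a single query row $Q=q(t)$, keys $K_j=k(y_j)$ and values $V_j=V(f(y_j))$. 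That is exactly single-query cross-attention. Proposition~\ref{prof} then shows that this discretization is consistent, with error $O(h)$.

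\textbf{Main obstacle.} The step I expect to be delicate is making ``special case of a neural operator layer'' precise. The standard definition in \cite{kovachki2023neural} uses kernels $\kappa(x,y,a(x),a(y))$ that depend on $u$ only pointwise, whereas here the kernel depends on $u$ globally through $\Phi_{\mathrm{KAN}}(\mathcal{P}(u))$ and through the normalizer $Z_u$. I will address this by stating the enlarged, nonlocal-kernel definition explicitly. This is the same generalization under which transformer attention is regarded as a neural operator in that reference. I will then note that $\kappa_u$ is measurable and bounded, with $\sup_{x,y}\kappa_u\le e^{2M_s}/|D|$, so the integral operator is bounded on $L^p$ and the layer fits the definition.
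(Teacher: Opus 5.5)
Your proposal is correct and follows essentially the same route as the paper. You match the GML operator to the nonlinear kernel-integral template using the whole-input kernel generalization $\kappa_v$ from Proposition~6 of \cite{kovachki2023neural}, then discretize numerator and denominator with uniform quadrature so the weights cancel into single-query $\mathrm{softmax}(QK^\top/\sqrt{d_a})V$. The differences are minor: the paper absorbs the value map into a matrix-valued kernel $\kappa_u W_V$ acting on $f(u)$ rather than integrating against $V\circ f(u)$, and your well-posedness and boundedness checks (which import the bounded query/key hypothesis of Theorem~\ref{theorem}) and the $O(h)$ consistency remark go beyond what the paper's proof verifies.
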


\begin{proof}
\textbf{Step 1: Recall the neural-operator-layer template (nonlinear kernel form).}
In the neural operator framework, one admissible \emph{kernel integration} layer is the nonlinear integral kernel operator:
\begin{equation}
(\mathcal{K}(v))(x)
=
\int_{D}
\kappa\!\big(x,y, v(\Pi(x)), v(y)\big)\, v(y)\, d\nu(y),
\label{eq:NO_kernel}
\end{equation}
where the kernel $\kappa$ is allowed to depend on the input representation $v$, hence inducing nonlinearity.

Moreover, in the proof of Proposition 6 (transformers as neural operators) from \cite{kovachki2023neural}, the authors explicitly introduce the notation $\kappa_v$ to indicate that the kernel may depend on the \emph{entire function} $v$, not only on pointwise values. This is described as a ``straightforward generalization'' beyond the written form of \eqref{eq:NO_kernel}. This generalization is precisely what is required for kernels that depend on the global token
\[
t = \Phi_{\mathrm{KAN}}(u).
\]

\textbf{Step 2: Writing SKAN's GML attention as an integral kernel operator.}
In our model, the attention-based Global Modulation Layer (GML) is defined as the integral operator (Eq.~(16)):
\begin{equation}
(\mathcal{A}_t[u])(x)
=
\int_D
\kappa_t(x,y)\,
V\!\big(f(u)(y)\big)\, dy,
\label{eq:GML_integral}
\end{equation}
where $t = \Phi_{\mathrm{KAN}}(u)$ and $V$ is a linear value projection.

Since $V$ is linear, there exists a matrix $W_V$ such that
\[
V(z) = W_V z.
\]
Let
\[
v(\cdot) := f(u)(\cdot) \in \mathbb{R}^{C_{\mathrm{fno}}}.
\]
Then \eqref{eq:GML_integral} can be rewritten as
\begin{equation}
(\mathcal{A}_{\Phi_{\mathrm{KAN}}(u)}[u])(x)
=
\int_D
\kappa_u(x,y)\,
W_V\, v(y)\, dy.
\end{equation}
Define the matrix-valued kernel
\[
\widetilde{\kappa}_u(x,y) := \kappa_u(x,y)\, W_V.
\]
The operator becomes
\begin{equation}
(\mathcal{A}_{\Phi_{\mathrm{KAN}}(u)}[u])(x)
=
\int_D
\widetilde{\kappa}_u(x,y)\, v(y)\, dy,
\label{eq:GML_NO_form}
\end{equation}
which is of exactly the same structural form as a neural-operator kernel integration map $\mathcal{K}(\cdot)$.

\textbf{Step 3: Validity of the SKAN attention kernel.}
The single-query cross-attention kernel is given by
\begin{equation}
\kappa_u(x,y)
=
\frac{
\exp\!\left(
\langle q(\Phi_{\mathrm{KAN}}(u)),\, k(y) \rangle / \sqrt{d_a}
\right)
}{
\int_D
\exp\!\left(
\langle q(\Phi_{\mathrm{KAN}}(u)),\, k(\xi) \rangle / \sqrt{d_a}
\right)
\, d\xi
},
\label{eq:GML_kernel}
\end{equation}
with
\[
q(\Phi_{\mathrm{KAN}}(u)) = Q(\Phi_{\mathrm{KAN}}(u)), \qquad
k(y) = K(f(u)(y)) = K(v(y)).
\]
Thus, the kernel $\kappa_u$ depends on:
(i) the \emph{entire input field} $u$ via the global token $\Phi_{\mathrm{KAN}}(u)$,
(ii) the pointwise feature values $v(y)$ through the key map $K(v(y))$,
(iii) and is independent of $x$ in the single-query setting, up to broadcast/readout.
This dependence is exactly the generalized form $\kappa_v$ allowed in Proposition 6 of \cite{kovachki2023neural}. Hence, $\mathcal{A}_{\Phi_{\mathrm{KAN}}(u)}$ is a valid nonlinear neural-operator kernel map.

\textbf{Step 4: Discretization yields standard softmax attention.}
Let $\{y_j\}_{j=1}^N \subset D$ be quadrature points with weights $w_j$. Approximating the integrals by quadrature yields
\begin{align}
\int_D
\kappa_u(x,y)\, V(v(y))\, dy
&\approx
\sum_{j=1}^N
w_j\, \kappa_u(x,y_j)\, V(v(y_j)), \\
\int_D
\exp\!\left(
\langle q(t), k(\xi) \rangle / \sqrt{d_a}
\right)
d\xi
&\approx
\sum_{\ell=1}^N
w_\ell\,
\exp\!\left(
\langle q(t), k(y_\ell) \rangle / \sqrt{d_a}
\right).
\end{align}
For uniform weights, constants cancel, yielding discrete attention coefficients
\begin{equation}
\alpha_j(t)
=
\frac{
\exp\!\left(
\langle q(t), k(y_j) \rangle / \sqrt{d_a}
\right)
}{
\sum_{\ell=1}^N
\exp\!\left(
\langle q(t), k(y_\ell) \rangle / \sqrt{d_a}
\right)
}.
\end{equation}
Hence,
\begin{equation}
(\mathcal{A}_t[u])(x_i)
\approx
\sum_{j=1}^N
\alpha_j(t)\, V\!\big(f(u)(y_j)\big),
\end{equation}
which is exactly the standard $\mathrm{softmax}(QK^\top)V$ attention computation.

\textbf{Step 5: Pointwise fusion preserves the neural-operator structure.}
Finally, the attention output is broadcast and fused with local features via
\begin{equation}
\hat{u}(x)
=
R\!\big(f(u)(x),\, \mathcal{A}_t[u](x)\big),
\end{equation}
where $R$ is a pointwise ($1\times 1$ convolution / MLP) map. Such local readouts are standard Nemitski-type operators in neural-operator architectures. Therefore, the full GML block—nonlocal kernel integration followed by pointwise fusion—remains a valid neural-operator layer.

\end{proof}

Consequently, the transformer component of SKAN (the GML cross-attention together with broadcast fusion/readout) constitutes a \emph{neural-operator-style layer}.

\section{Additional Experiment results}
In this section, we present the dataset description, additional experimental results on PDEBench, and additional visualization results.


\subsection{Dataset}
\label{dataset}
For the PDEbench equations for 1D CNS, 1D Diffusion Reaction, 2D Darcy flow, and 2D Shallow water, we provide the following equation summary in a table \ref{PDEsummary} for the PDE. For the Shallow Water Equation and Climate modelling from \cite{kissas2022learning}.

For Climate modelling, the goal is to to approximate the relationship between surface air temperature and surface air pressure. It seek to learn a black-box operator that can be used to predict pressure directly from temperature. Concretely, we consider the mapping
\[
T(x) \;\longmapsto\; P(y),
\]
where \(x,y \in [-90,90] \times [0,360]\) denote latitude--longitude
coordinates. For a fixed day of the year, the (unknown) solution operator
maps the surface air temperature field to the corresponding surface air
pressure field.

In this setting, the input and output domains coincide, so that
\(X = Y\). Since both temperature and pressure are scalar fields, we have
\(d_u = d_s = 1\), and the spatial dimensions satisfy \(d_x = d_y = 2\),
corresponding to the two-dimensional latitude--longitude grid.
The operator learning problem can therefore be formulated as
\[
G : C(X,\mathbb{R}) \;\to\; C(X,\mathbb{R}),
\]
where \(G\) maps a continuous temperature field on \(X\) to a continuous
pressure field on the same domain.

For Shallow Water Equations, it describes the evolution of a thin fluid layer
when the horizontal length scales are much larger than the vertical one, so that the flow
can be treated as depth-averaged.

Let \(x=(x_1,x_2)\in(0,1)^2\) and \(t\in(0,1]\). The unknowns are the total fluid column
height \(\rho(t,x)\) and the depth-averaged horizontal velocity components
\(v_1(t,x)\) and \(v_2(t,x)\). With gravitational acceleration \(g\), the shallow-water
system can be written as
\begin{align}
\frac{\partial \rho}{\partial t}
+ \frac{\partial (\rho v_1)}{\partial x_1}
+ \frac{\partial (\rho v_2)}{\partial x_2}
&= 0, \\
\frac{\partial (\rho v_1)}{\partial t}
+ \frac{\partial}{\partial x_1}\!\left(\rho v_1^{2} + \frac{1}{2} g\rho^{2}\right)
+ \frac{\partial (\rho v_1 v_2)}{\partial x_2}
&= 0, \\
\frac{\partial (\rho v_2)}{\partial t}
+ \frac{\partial (\rho v_1 v_2)}{\partial x_1}
+ \frac{\partial}{\partial x_2}\!\left(\rho v_2^{2} + \frac{1}{2} g\rho^{2}\right)
&= 0 .
\end{align}

Equivalently, the system may be expressed in conservative form as
\[
\frac{\partial \tilde U}{\partial t}
+ \frac{\partial \tilde F(\tilde U)}{\partial x_1}
+ \frac{\partial \tilde G(\tilde U)}{\partial x_2}
= 0,
\]
with conserved variables and fluxes
\[
\tilde U =
\begin{pmatrix}
\rho \\ \rho v_1 \\ \rho v_2
\end{pmatrix},
\qquad
\tilde F =
\begin{pmatrix}
\rho v_1 \\
\rho v_1^2 + \frac{1}{2} g\rho^2 \\
\rho v_1 v_2
\end{pmatrix},
\qquad
\tilde G =
\begin{pmatrix}
\rho v_2 \\
\rho v_1 v_2 \\
\rho v_2^2 + \frac{1}{2} g\rho^2
\end{pmatrix}.
\]

Given an initial state, the corresponding solution operator maps the initial height and
velocity fields to the height and velocity fields at later times. Since the input and
output live on the same space--time domain, we take \(X=Y\) with
\(d_x=d_y=3\) (time plus two spatial coordinates) and \(d_u=d_s=3\) (three output channels).
The operator learning task is therefore
\[
G:\; C(X,\mathbb{R}^3)\;\to\; C(X,\mathbb{R}^3).
\]

It impose reflective (solid-wall) boundary conditions, i.e.\ no normal flow through the
boundary:
\[
v\cdot \hat n \;=\; v_1 n_{x_1} + v_2 n_{x_2} \;=\; 0
\qquad \text{on } \partial(0,1)^2,
\]
where \(\hat n = n_{x_1}\hat i + n_{x_2}\hat j\) denotes the unit outward normal.

To generate training data, sample random initial conditions corresponding to a
``falling droplet'' perturbation with zero initial velocities was used:
\begin{align}
\rho(0,x)
&= 1 + h \exp\!\left(-\frac{(x_1-\xi)^2 + (x_2-\zeta)^2}{w}\right), \\
v_1(0,x) &= 0, \qquad v_2(0,x) = 0,
\end{align}
where \(h\) controls the droplet amplitude, \(w\) its width, and \((\xi,\zeta)\) its
initial center. Because the velocities are identically zero at \(t=0\) for all realizations,
we instead define the operator input at the first time step \(t_0=\Delta t=0.002\,\mathrm{s}\),
using the state \((\rho(t_0,\cdot), v_1(t_0,\cdot), v_2(t_0,\cdot))\).

The random parameters are drawn from uniform distributions:
\[
h \sim \mathcal U(1.5,2.5),\qquad
w \sim \mathcal U(0.002,0.008),\qquad
\xi \sim \mathcal U(0.4,0.6),\qquad
\zeta \sim \mathcal U(0.4,0.6).
\]

The dataset was constructed by sampling these initial conditions on a \(32\times 32\) grid and
solving the forward problem with a Lax--Friedrichs scheme. This produces solutions on the
full \(32\times 32\) grid, which can then be sub-sampled to form training sets that use
only a fraction of the available spatial observations while still learning to predict the
full solution.

\begin{table}[t]
\centering
\small
\setlength{\tabcolsep}{6pt}
\renewcommand{\arraystretch}{1.25}
\begin{tabular}{@{}l p{0.58\linewidth} c c c@{}}
\toprule
\textbf{PDE} & \textbf{Governing equation(s)} & $\mathbf{N_s}$ & $\mathbf{N_t}$ & \textbf{\# samples} \\
\midrule

1D CNS &
\begin{minipage}[t]{\linewidth}\vspace{2pt}
(1D compressible Navier--Stokes)
\[
\partial_t\rho + \nabla\!\cdot(\rho v)=0,
\]
\[
\rho(\partial_t v + v\cdot \nabla v)= -\nabla p + \eta\,\partial_{xx}v + (\zeta+\eta/3)\,\nabla(\nabla \cdot v),
\]
\[
\partial_t\!\left(\epsilon + \frac{\rho v^2}{2}\right) + 
\nabla \cdot\!\left[\left(p+\epsilon+\frac{\rho v^2}{2}\right)v - v\cdot\sigma'\right]=0.
\]
\vspace{2pt}\end{minipage}
& $128$ & $20$ & $10000$ \\

1D Diffusion--Reaction &
\begin{minipage}[t]{\linewidth}\vspace{2pt}
\[
\partial_t u(t,x) - \nu\,\partial_{xx}u(t,x) - \rho\,u(t,x)\bigl(1-u(t,x)\bigr)=0.
\]
\vspace{2pt}\end{minipage}
& $256$ & $200$ & $10000$ \\

2D DarcyFlow &
\begin{minipage}[t]{\linewidth}\vspace{2pt}
(steady-state Darcy flow)
\[
-\nabla\!\cdot\!\bigl(a(x)\nabla u(x)\bigr)= f(x), \qquad x\in(0,1)^2,
\]
\[
u(x)=0, \qquad x\in\partial(0,1)^2.
\]
\vspace{2pt}\end{minipage}
& $64\times64$ & -- & $10000$ \\

2D Shallow Water &
\begin{minipage}[t]{\linewidth}\vspace{2pt}
\[
\partial_t h + \partial_x(hu) + \partial_y(hv)=0,
\]
\[
\partial_t(hu) + \partial_x\!\left(u^2h + \tfrac12 g_r h^2\right) + \partial_y(uvh)= -g_r h\,\partial_x b,
\]
\[
\partial_t(hv) + \partial_y\!\left(v^2h + \tfrac12 g_r h^2\right) + \partial_x(uvh)= -g_r h\,\partial_y b.
\]
\vspace{2pt}\end{minipage}
& $64\times64$ & $100$ & $1000$ \\

\bottomrule
\end{tabular}
\caption{PDEs and dataset sizes (\(N_s\): spatial resolution, \(N_t\): temporal resolution). For DarcyFlow, \(N_t\) is not applicable because the dataset is steady-state.}
\label{PDEsummary}
\end{table}

\subsection{Additional Experiment Results}
\label{exp}
In this section, we provide additional experiment result for the PDEBench, including normalized RMSE,  Maximum value of rms error,  RMSE of conserved
variables,  RMSE at boundaries,  RMSE in Fourier space - low,  RMSE in Fourier space - middle and  RMSE in Fourier space - high. 
\begin{table*}[t!]
  \centering
  \caption{ Performance comparison among our method and other representative baselines via different datasets in \Sam{normalized RMSE}. The best-performing results are highlighted in \colorbox[HTML]{BBFFBB}{green}, while the second-best results are indicated in \Sam{blue} colour.}
  \resizebox{\textwidth}{!}{
    \begin{tabular}{lcccc}
      \toprule
      \textbf{Method} 
      & \textbf{CNS (1D)} 
      & \textbf{Diffusion--Reaction (1D)} 
      & \textbf{Darcy Flow (2D)} 
      & \textbf{Shallow Water (2D)} \\
      \midrule

      U-Net (2015)
      & 0.8123
      & 0.07399
      & \cellcolor[HTML]{BBFFBB}\textbf{0.06215} 
      & 0.11552\\

      FNO (2021)
      &  0.23532
      &  0.05484
      &  0.13772
      &  0.01393 \\

      FFNO (2023)
      &  \cellcolor[HTML]{BBFFBB}\textbf{0.16448}
      &  0.02685
      &  0.13121
      & 0.01145 \\

      LNO (2024)
      &  0.72711
      & 0.09054
      &  0.21483
      &  0.03793\\

      Transolver (2024)
      & 0.35738
      & 0.01347
      & 0.10777 
      & 0.00565  \\

      LocalFNO (2024)
      & -- 
      & -- 
      & 0.10662
      & \Sam{0.00422} \\

      DeepOKAN (2025)
      & 2.77435
      & 0.08516 
      & 0.09196
      & 0.08282  \\

      CViT (2025)
      & 0.19233
      & \Sam{0.01263} 
      & 0.12313
      & 0.01133 \\

      \midrule
      \textbf{(Ours)} 
      & \Sam{0.17462} 
      & \cellcolor[HTML]{BBFFBB}\textbf{0.00814} 
      & \Sam{0.06304} 
      & \cellcolor[HTML]{BBFFBB}\textbf{0.00231} \\

      \midrule
      \textbf{Improvement (\%)} 
      & -6.16\% 
      & 35.55\% 
      & -1.43\% 
      & 45.26\% \\

      \bottomrule
    \end{tabular}
  }
\end{table*}

\begin{table*}[t!]
  \centering
  \caption{  Performance comparison among our method and other representative baselines via different datasets in \Sam{Maximum value of rms error}. The best-performing results are highlighted in \colorbox[HTML]{BBFFBB}{green}, while the second-best results are indicated in \Sam{blue} colour.}
  \resizebox{\textwidth}{!}{
    \begin{tabular}{lcccc}
      \toprule
      \textbf{Method} 
      & \textbf{CNS (1D)} 
      & \textbf{Diffusion--Reaction (1D)} 
      & \textbf{Darcy Flow (2D)} 
      & \textbf{Shallow Water (2D)} \\
      \midrule

      U-Net (2015)
      &  50.83448
      &  0.21527
      &  \cellcolor[HTML]{BBFFBB}\textbf{0.18478}
      &  0.75481\\

      FNO (2021)
      &  4.78869
      &  0.0521
      &  0.22714
      &  0.13625\\

      FFNO (2023)
      &  \Sam{4.55828}
      & \Sam{0.0304}
      &  0.25059
      &  0.12899\\

      LNO (2024)
      & 38.83067
      & 0.25961
      & 0.41448 
      & 0.26936 \\

      Transolver (2024)
      & 8.22922
      & 0.03871
      & 0.26815 
      & 0.12001  \\

      LocalFNO (2024)
      & -- 
      & -- 
      & 0.21291
      & \Sam{0.09241} \\

      DeepOKAN (2025)
      & 39.21762
      & 0.14834
      & 0.22246
      & 0.63804 \\

      CViT (2025)
      & 8.25447
      & 0.04548
      & 0.22511
      & 0.15085 \\

      \midrule
      \textbf{(Ours)} 
      & \cellcolor[HTML]{BBFFBB}\textbf{2.89683} 
      & \cellcolor[HTML]{BBFFBB}\textbf{0.02429} 
      & \Sam{0.19617} 
      & \cellcolor[HTML]{BBFFBB}\textbf{0.03033} \\

      \midrule
      \textbf{Improvement (\%)} 
      & 36.45\% 
      & 18.09\% 
      & -6.16\% 
      & 67.18\% \\

      \bottomrule
    \end{tabular}
  }
\end{table*}

\begin{table*}[t!]
  \centering
  \caption{ Performance comparison among our method and other representative baselines via different datasets in \Sam{RMSE of conserved variables}. The best-performing results are highlighted in \colorbox[HTML]{BBFFBB}{green}, while the second-best results are indicated in \Sam{blue} colour.}
  \resizebox{\textwidth}{!}{
    \begin{tabular}{lcccc}
      \toprule
      \textbf{Method} 
      & \textbf{CNS (1D)} 
      & \textbf{Diffusion--Reaction (1D)} 
      & \textbf{Darcy Flow (2D)} 
      & \textbf{Shallow Water (2D)} \\
      \midrule

      U-Net (2015)
      & 15.34809  
      & 0.05359 
      & 0.01437 
      & 0.03397 \\

      FNO (2021)
      & 0.0879
      & 0.03036 
      & 0.02368 
      & 0.00058 \\

      FFNO (2023)
      &  0.06288
      & 0.01511 
      & 0.01976 
      & 0.00069 \\

      LNO (2024)
      & 10.74895
      & 0.07057
      & 0.02867 
      & 0.00452 \\

      Transolver (2024)
      & 0.22427
      & \Sam{0.00963}
      & \Sam{0.0144} 
      & 0.00066  \\

      LocalFNO (2024)
      & -- 
      & -- 
      &  0.02042
      & \Sam{0.00038} \\

      DeepOKAN (2025)
      & 10.80045
      & 0.03845 
      & 0.0149
      & 0.01251 \\

      CViT (2025)
      & \Sam{0.05702}
      & 0.01131
      & 0.0166
      & 0.00144 \\

      \midrule
      \textbf{(Ours)} 
      & \cellcolor[HTML]{BBFFBB}\textbf{0.0433} 
      & \cellcolor[HTML]{BBFFBB}\textbf{0.00586} 
      & \cellcolor[HTML]{BBFFBB}\textbf{0.01183} 
      & \cellcolor[HTML]{BBFFBB}\textbf{0.00012} \\

      \midrule
      \textbf{Improvement (\%)} 
      & 24.37\% 
      & 39.15\% 
      & 17.85\% 
      & 68.42\% \\

      \bottomrule
    \end{tabular}
  }
 
\end{table*}\begin{table*}[t!]
  \centering
  \caption{ Performance comparison among our method and other representative baselines via different datasets in \Sam{RMSE at boundaries}. The best-performing results are highlighted in \colorbox[HTML]{BBFFBB}{green}, while the second-best results are indicated in \Sam{blue} colour.}
  \resizebox{\textwidth}{!}{
    \begin{tabular}{lcccc}
      \toprule
      \textbf{Method} 
      & \textbf{CNS (1D)} 
      & \textbf{Diffusion--Reaction (1D)} 
      & \textbf{Darcy Flow (2D)} 
      & \textbf{Shallow Water (2D)} \\
      \midrule

      U-Net (2015)
      &  11.72038
      &  0.03576
      &  \cellcolor[HTML]{BBFFBB}\textbf{0.00407}
      &  0.01159\\

      FNO (2021)
      & 0.2955
      & 0.02934 
      & 0.02069 
      & 0.00338 \\

      FFNO (2023)
      &  \Sam{0.22411}
      & 0.01408 
      & 0.0136 
      & 0.00364 \\

      LNO (2024)
      &  8.82148
      & 0.04704
      & 0.02373
      & 0.01552 \\

      Transolver (2024)
      & 0.62781
      & 0.00682
      & 0.01248 
      & 0.00371 \\

      LocalFNO (2024)
      & -- 
      & -- 
      & 0.00772
      & \Sam{0.00201} \\

      DeepOKAN (2025)
      & 9.05406
      & 0.0409 
      & 0.01221
      & 0.02932 \\

      CViT (2025)
      & 0.74478
      & \Sam{0.00675}
      & 0.01503
      & 0.00572 \\

      \midrule
      \textbf{(Ours)} 
      & \cellcolor[HTML]{BBFFBB}\textbf{0.11007} 
      & \cellcolor[HTML]{BBFFBB}\textbf{0.00386} 
      & \Sam{0.00581} 
      & \cellcolor[HTML]{BBFFBB}\textbf{0.00054} \\

      \midrule
      \textbf{Improvement (\%)} 
      & 50.89\% 
      & 45.48\% 
      & -42.75\% 
      & 73.13\% \\

      \bottomrule
    \end{tabular}
  }
  
\end{table*}

\begin{table*}[t!]
  \centering
  \caption{ Performance comparison among our method and other representative baselines via different datasets in \Sam{RMSE in Fourier space - low}. The best-performing results are highlighted in \colorbox[HTML]{BBFFBB}{green}, while the second-best results are indicated in \Sam{blue} colour.}
  \resizebox{\textwidth}{!}{
    \begin{tabular}{lcccc}
      \toprule
      \textbf{Method} 
      & \textbf{CNS (1D)} 
      & \textbf{Diffusion--Reaction (1D)} 
      & \textbf{Darcy Flow (2D)} 
      & \textbf{Shallow Water (2D)} \\
      \midrule

      U-Net (2015)
      &  4.42510
      & 0.01579993 
      & \Sam{0.00676289} 
      & 0.03163715 \\

      FNO (2021)
      & 0.11258
      & 0.0078190938 
      & 0.01132 
      & 0.00059789  \\

      FFNO (2023)
      & \Sam{0.09931}
      & 0.0040068841
      & 0.01056 
      & 0.00223955 \\

      LNO (2024)
      & 3.01964
      & 0.0183305
      & 0.01507362
      & 0.0067391  \\

      Transolver (2024)
      & 0.27045 
      & \Sam{0.0026418727} 
      & 0.00813 
      & 0.00073594 \\

      LocalFNO (2024)
      & -- 
      & -- 
      & 0.02896
      & \Sam{0.00041726} \\

      DeepOKAN (2025)
      & 3.09761
      & 0.01418 
      & 0.00787
      & 0.02002841 \\

      CViT (2025)
      & 0.22286
      & 0.00306786
      & 0.00936
      & 0.00136 \\

      \midrule
      \textbf{(Ours)} 
      & \cellcolor[HTML]{BBFFBB}\textbf{0.04996734} 
      & \cellcolor[HTML]{BBFFBB}\textbf{0.0016653806} 
      & \cellcolor[HTML]{BBFFBB}\textbf{0.00590517} 
      & \cellcolor[HTML]{BBFFBB}\textbf{0.00012509} \\

      \midrule
      \textbf{Improvement (\%)} 
      & 49.69\% 
      & 36.96\% 
      & 12.68\% 
      & 70.02\% \\

      \bottomrule
    \end{tabular}
  }
  
\end{table*}

\begin{table*}[t!]
  \centering
  \caption{ Performance comparison among our method and other representative baselines via different datasets in \Sam{RMSE in Fourier space - middle}. The best-performing results are highlighted in \colorbox[HTML]{BBFFBB}{green}, while the second-best results are indicated in \Sam{blue} colour.}
  \resizebox{\textwidth}{!}{
    \begin{tabular}{lcccc}
      \toprule
      \textbf{Method} 
      & \textbf{CNS (1D)} 
      & \textbf{Diffusion--Reaction (1D)} 
      & \textbf{Darcy Flow (2D)} 
      & \textbf{Shallow Water (2D)} \\
      \midrule

      U-Net (2015)
      & 0.18985
      & 0.00102379 
      & \cellcolor[HTML]{BBFFBB}\textbf{0.000643} 
      & 0.00874112 \\

      FNO (2021)
      &  0.04998
      &  0.000050198119
      &  0.00104
      &  0.0006694\\

      FFNO (2023)
      & \Sam{0.04275}
      & 0.000052739455 
      & 0.00152 
      & 0.00109296 \\

      LNO (2024)
      & 0.06662
      & 0.0000079677
      & 0.00211222 
      & 0.00487229  \\

      Transolver (2024)
      & 0.08406
      & 0.000080886944
      & 0.00128 
      & 0.00061484 \\

      LocalFNO (2024)
      & -- 
      & -- 
      & 0.00297
      & \Sam{0.00050069} \\

      DeepOKAN (2025)
      & 0.08463
      & 0.00321 
      & 0.00108
      & 0.00839461 \\

      CViT (2025)
      & 0.07991
      & \Sam{0.0000476469}
      & 0.00115
      & 0.00126 \\

      \midrule
      \textbf{(Ours)} 
      & \cellcolor[HTML]{BBFFBB}\textbf{0.0160807} 
      & \cellcolor[HTML]{BBFFBB}\textbf{0.000044599139} 
      & \Sam{0.00092448} 
      & \cellcolor[HTML]{BBFFBB}\textbf{0.00026044} \\

      \midrule
      \textbf{Improvement (\%)} 
      & 62.37\% 
      & 6.40\% 
      & -43.78\% 
      & 47.98\% \\

      \bottomrule
    \end{tabular}
  }
  
\end{table*}

\begin{table*}[t!]
  \centering
  \caption{ Performance comparison among our method and other representative baselines via different datasets in \Sam{RMSE in Fourier space - high}. The best-performing results are highlighted in \colorbox[HTML]{BBFFBB}{green}, while the second-best results are indicated in \Sam{blue} colour.}
  \resizebox{\textwidth}{!}{
    \begin{tabular}{lcccc}
      \toprule
      \textbf{Method} 
      & \textbf{CNS (1D)} 
      & \textbf{Diffusion--Reaction (1D)} 
      & \textbf{Darcy Flow (2D)} 
      & \textbf{Shallow Water (2D)} \\
      \midrule

      U-Net (2015)
      & 0.03155
      & 0.00022391 
      & \cellcolor[HTML]{BBFFBB}\textbf{0.00012363} 
      & 0.00196839 \\

      FNO (2021)
      & 0.00706
      & \Sam{0.0000031962038} 
      & \Sam{0.00026} 
      & 0.00147641 \\

      FFNO (2023)
      & 0.00686
      & 0.0000038141336 
      & 0.00053
      & 0.00042784 \\

      LNO (2024)
      & \Sam{0.00583}
      & \cellcolor[HTML]{BBFFBB}\textbf{0.0000000697743}
      &0.00075919
      & 0.00346834 \\

      Transolver (2024)
      & 0.01283
      & 0.000011153535 
      & 0.00041 
      & 0.00042714 \\

      LocalFNO (2024)
      & -- 
      & -- 
      & 0.00118
      & \Sam{0.00033259} \\

      DeepOKAN (2025)
      & 0.00893
      & 0.00019 
      & 0.00025
      & 0.00229325 \\

      CViT (2025)
      & 0.0156
      & 0.000024853
      & 0.0003
      & 0.00087 \\

      \midrule
      \textbf{(Ours)} 
      & \cellcolor[HTML]{BBFFBB}\textbf{0.00570552} 
      & 0.0000041244448 
      & 0.00030422 
      & \cellcolor[HTML]{BBFFBB}\textbf{0.00019776} \\

      \midrule
      \textbf{Improvement (\%)} 
      & 2.14\% 
      & -5,810.6\% 
      & -1.46\% 
      & 40.54\% \\

      \bottomrule
    \end{tabular}
  }
  
\end{table*}

\subsection{Additional Visulisation}
\label{vis}
In this section, we provide additional visulisation results for 2D Shallow water, Climate Modelling and Shallow Water Equations.
\begin{figure*}[t]
    \centering
    \includegraphics[width=\textwidth]{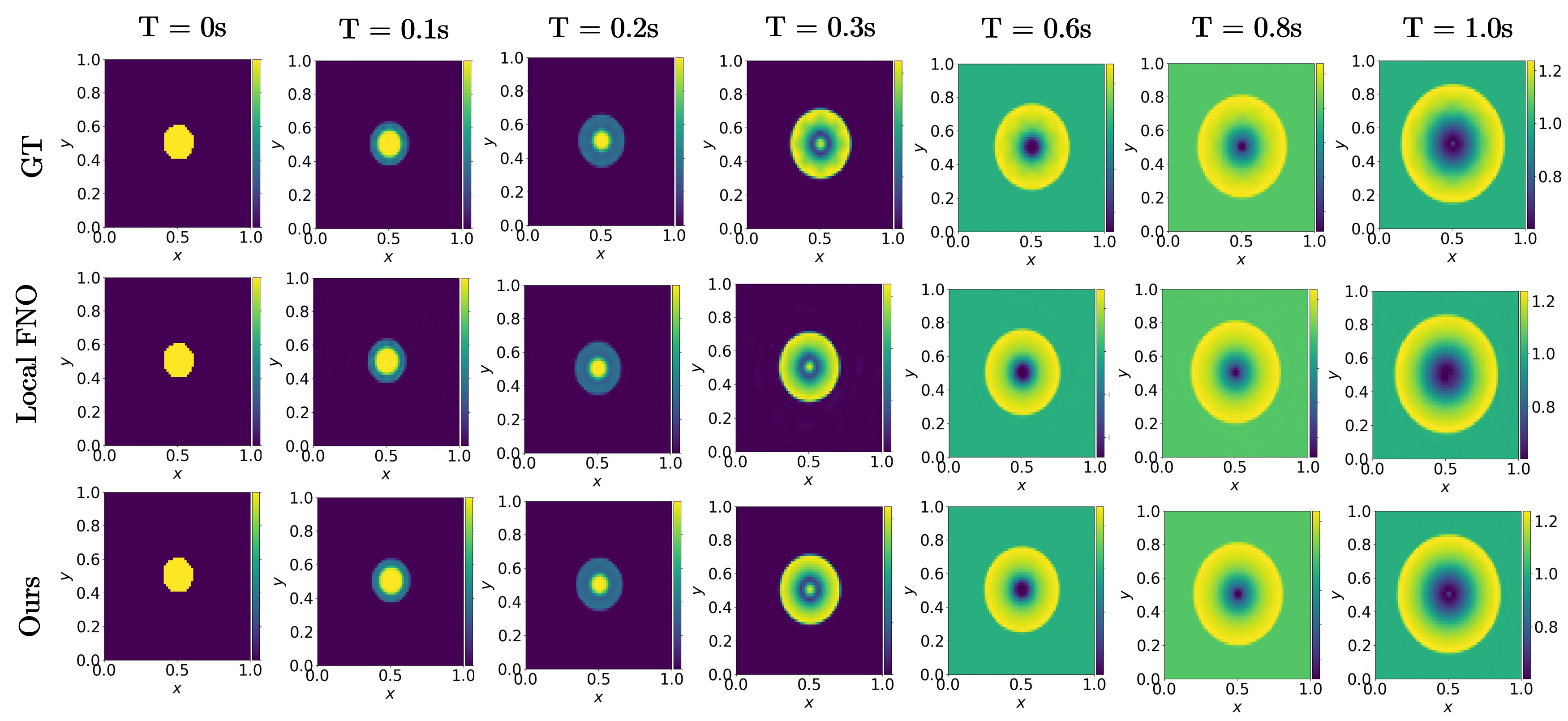}
    \caption{Qualitative comparison across Local FNO and ours in 2D shallow water.} 
    
\end{figure*}

\begin{figure*}[t]
    \centering
    \includegraphics[width=\textwidth]{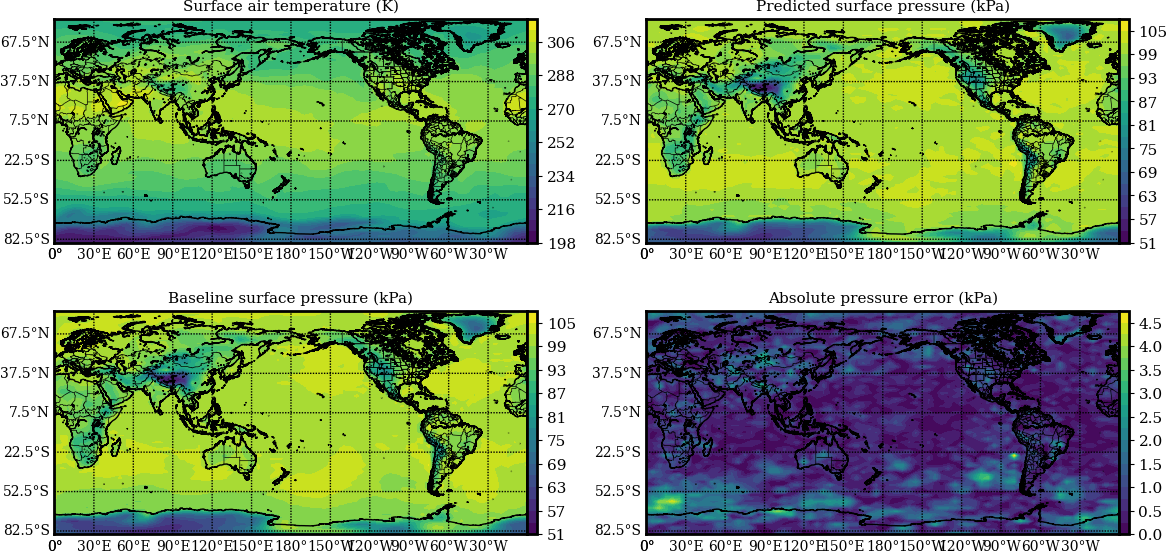}
    \caption{Full-resolution comparison on the climate modeling benchmark. Shown are the input temperature field, the model prediction, the ground truth, and the absolute error with respect to the reference solution..} 
    
\end{figure*}

\begin{figure*}[t]
    \centering
    \includegraphics[width=\textwidth]{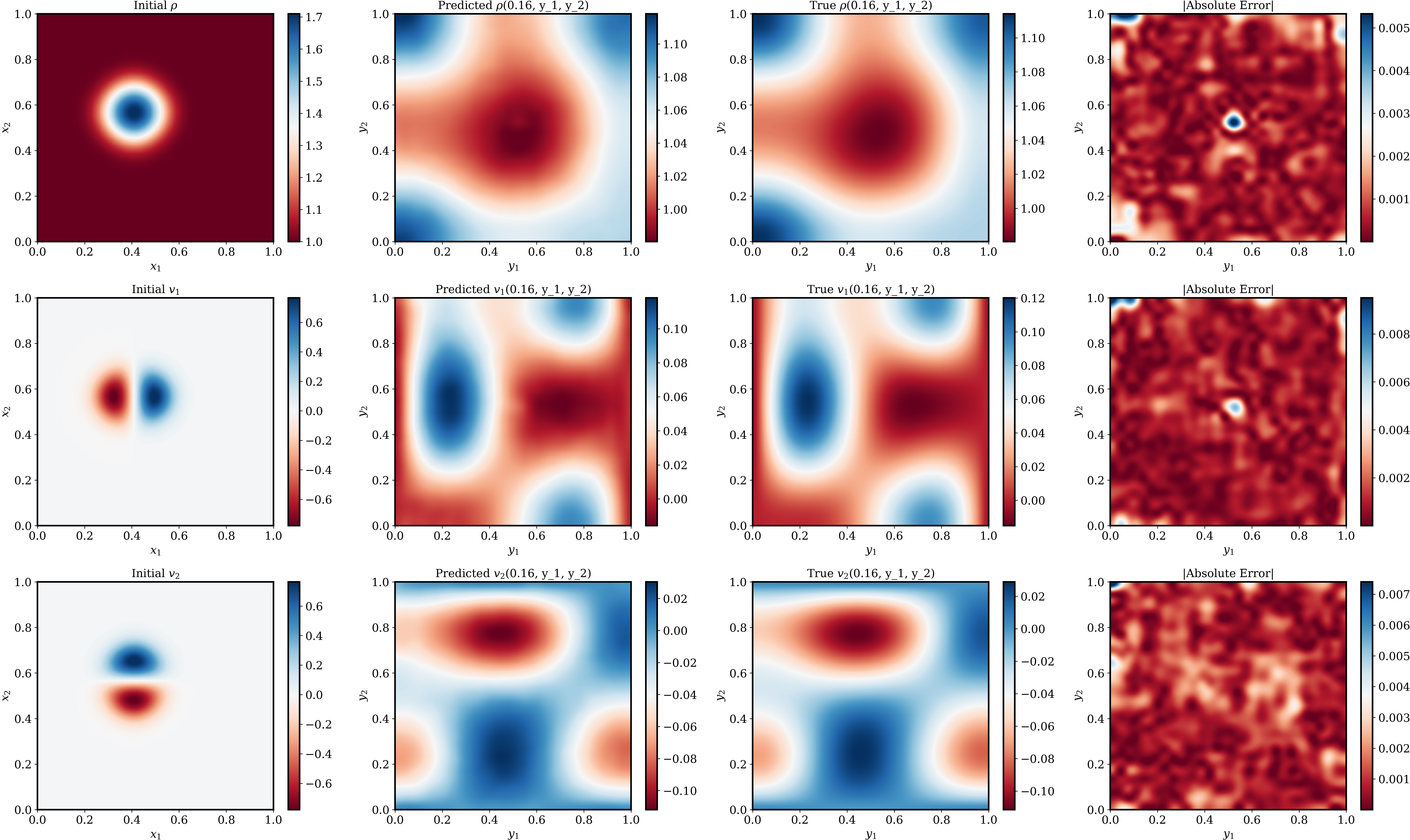}
    \caption{Qualitative comparison on the Shallow Water benchmark. Shown are the initial conditions, the predicted and ground-truth solutions, and the absolute error at 
    t=0.16s.} 
    
\end{figure*}

\begin{figure*}[t]
    \centering
    \includegraphics[width=\textwidth]{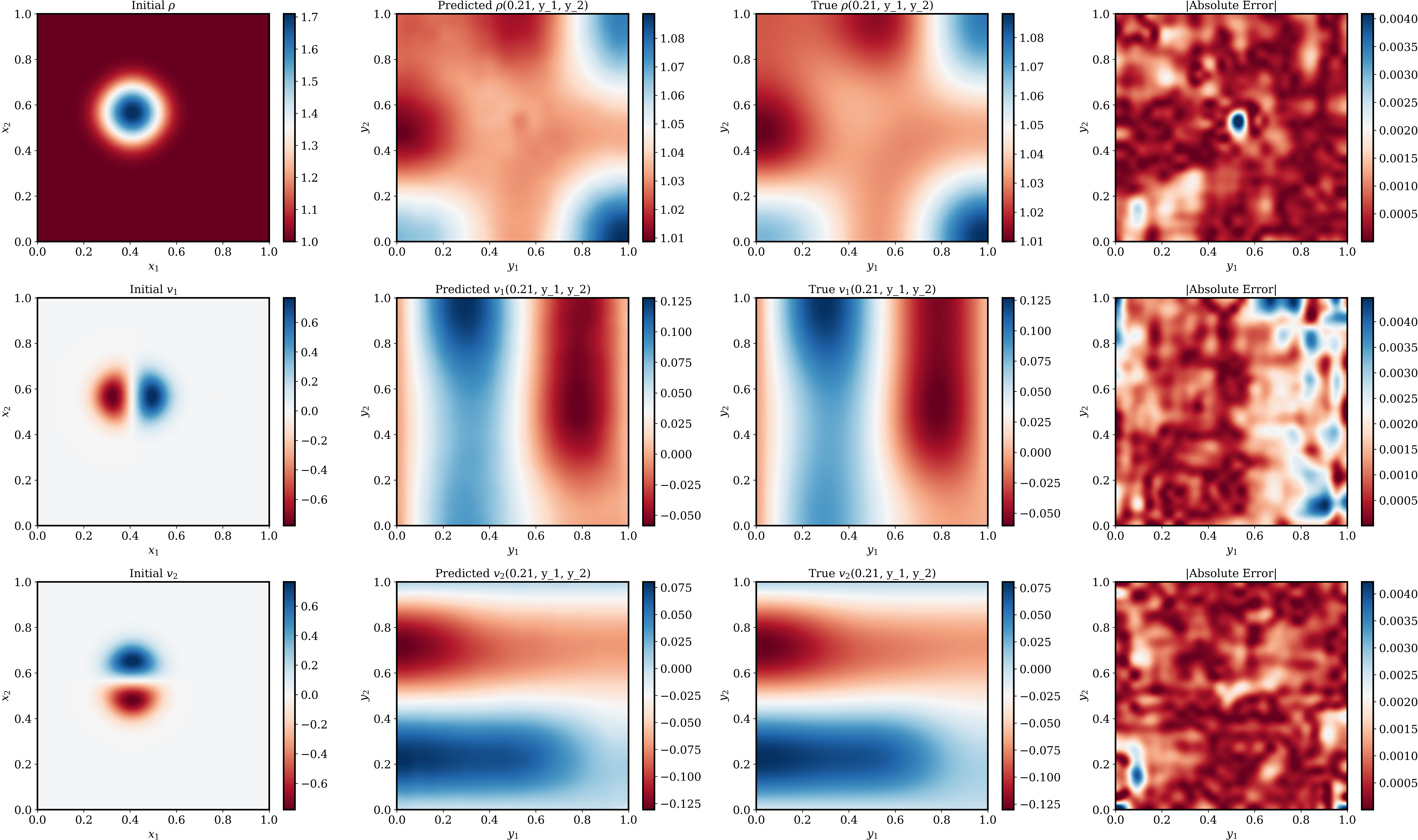}
    \caption{Qualitative comparison on the Shallow Water benchmark. Shown are the initial conditions, the predicted and ground-truth solutions, and the absolute error at 
    t=0.21s.} 
    
\end{figure*}

\begin{figure*}[t]
    \centering
    \includegraphics[width=\textwidth]{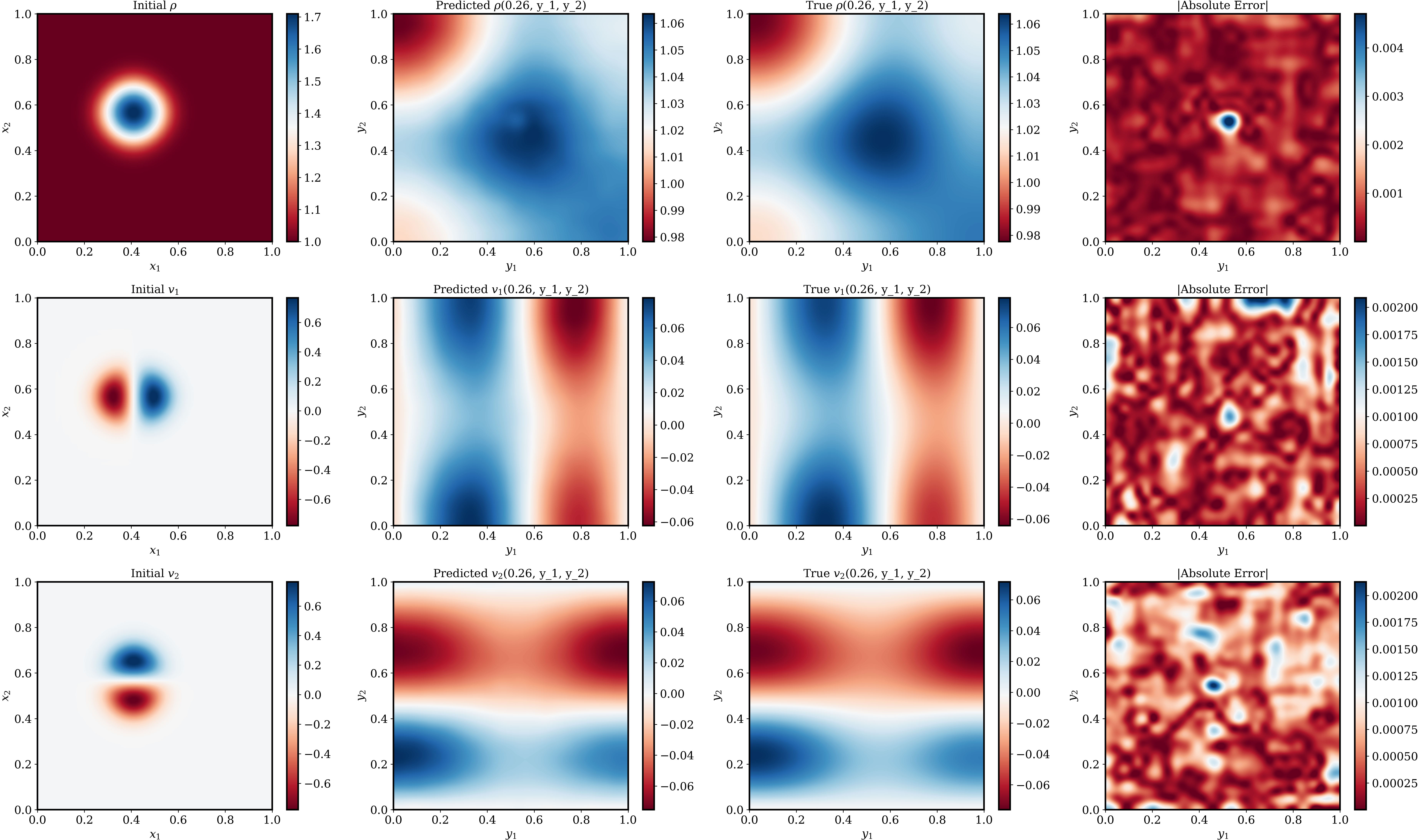}
    \caption{Qualitative comparison on the Shallow Water benchmark. Shown are the initial conditions, the predicted and ground-truth solutions, and the absolute error at 
    t=0.26s.} 
    
\end{figure*}

\end{document}